\newif\ifcompiletikz
\DeclareMathOperator*{\argmax}{\arg\!\max}
\def\R{\mathbb{R}}
\def\X{\mathcal{X}}
\def\O{\mathcal{O}}
\def\N{\mathcal{N}}
\def\abs#1{\left\lvert#1\right\rvert}
\def\norm#1{\left\lVert#1\right\rVert}
\def\Prgt{\textstyle{\Pr_>}}
\def\Prlt{\textstyle{\Pr_\leq}}
\def\wh#1{\widehat{#1}}
\def\mat#1{\mathbf{#1}}
\def\comment#1{}
\newtheorem{lemma}{Lemma}
\newtheorem{theorem}{Theorem}
\newtheorem{corollary}{Corollary}
\theoremstyle{remark}
\newcommand\blfootnote[1]{%
  \begingroup
  \renewcommand\thefootnote{}\footnote{#1}%
  \addtocounter{footnote}{-1}%
  \endgroup
}
\title{Gaussian Process Optimization with Mutual Information}
\author[1]{Emile Contal}
\author[2]{Vianney Perchet}
\author[1]{Nicolas Vayatis}
\affil[1]{CMLA, UMR CNRS 8536, ENS Cachan, France}
\affil[2]{LPMA, Universit\'e Paris Diderot, France}
\date{}
\begin{document} 

\maketitle
\blfootnote{Preprint for the 31st International Conference on Machine Learning (ICML 2014)}

\begin{abstract}
In this paper, we analyze a generic algorithm scheme for sequential global optimization
using Gaussian processes.
The upper bounds we derive on the cumulative regret for this generic algorithm improve
by an exponential factor the previously known bounds for algorithms like \textsf{GP-UCB}.
We also introduce the novel
Gaussian Process Mutual Information algorithm (\textsf{GP-MI}),
which significantly improves further these upper bounds for the cumulative regret.
We confirm the efficiency of this algorithm on synthetic and real tasks
against the natural competitor, \textsf{GP-UCB}, and also the Expected Improvement heuristic.
\end{abstract} 

\vfill
\pagebreak
\section*{Erratum}
After the publication of our article,
we found an error in the proof of Lemma~\ref{lem:mt} which invalidates the main theorem.
It appears that the information given to the algorithm is not sufficient for the main theorem to hold true.
The theoretical guarantees would remain valid in a setting where the algorithm observes the instantaneous regret
instead of noisy samples of the unknown function.
We describe in this page the mistake and its consequences.

Let $f : \mathcal{X} \to \mathbb{R}$ be the unknown function to be optimized, which is a sample from a Gaussian process.
Let's fix $x^\star, x_1, \dots, x_T \in \mathcal{X}$
and the observations $y_t=f(x_t)+\epsilon_t$ where the noise variables $\epsilon_t$ are independent Gaussian noise $\mathcal{N}(0,\sigma^2)$.
We define the instantaneous regret $r_t=f(x^\star)-f(x_t)$ and,
\[M_T = \sum_{t=1}^T \Big( r_t-\mathbb{E}\big[r_t \mid y_1,\dots,y_{t-1}\big] \Big)\,.\]

In Lemma~\ref{lem:mt}, we claimed that $M_T$ is a Gaussian martingale with respect to $\mathbf{Y}_T=y_1, \dots, y_T$.
Even if $M_t-M_{t-1}$ is a centered Gaussian conditioned on $\mathbf{Y}_{T-1}$,
it is wrong to say that $M_T$ is a martingale since it is not measurable with respect to $\mathbf{Y}_T$.

In order to fix Lemma~\ref{lem:mt}, it is possible to modify $M_T$ and use its natural filtration
$\mathcal{F}_T = \big\{ r_t \big\}_{t \leq T}$ instead of $\mathbf{Y}_T$,
\[M_T = \sum_{t=1}^T \Big( r_t - \mathbb{E}\big[r_t \mid \mathcal{F}_{t-1}\big] \Big)\,.\]
Then $M_T$ is a Gaussian martingale with respect to $\mathcal{F}_T$.
Now to adapt the algorithm for this new quantity
it needs to observe $r_t$ instead of $y_t$ to be able to compute both
the posterior expectation and variance for all $x$ in $\mathcal{X}$:
\[\mu_t(x) = \mathbb{E}\big[ f(x) \mid \mathcal{F}_{t-1}\big] \text{ and }
\sigma_t^2(x) = \mathrm{Var<}\big[ f(x) \mid \mathcal{F}_{t-1}\big]\,.\]

We remark that the experiments performed in this article
are remarkably good in spite of Lemma~\ref{lem:mt} being unproved.
After having discovered the mistake
we were able to build scenarios were the \textsf{GP-MI} algorithm is overconfident and misses the optimum of $f$,
and therefore incurs a linear cumulative regret.
\pagebreak

\section{Introduction}
\label{intro}
Stochastic optimization problems are encountered in numerous real world domains including
engineering design \cite{wang2007},
finance \cite{ziemba2006},
natural sciences \cite{floudas2000},
or in machine learning for selecting models by tuning the parameters of learning algorithms \cite{snoek2012}.
We aim at finding the input of a given system which optimizes the output (or reward).
In this view, an iterative procedure uses the previously acquired measures
to choose the next query predicted to be the most useful.
The goal is to maximize the sum of the rewards received at each iteration,
that is to minimize the cumulative regret
by balancing exploration (gathering information by favoring locations with high uncertainty)
and exploitation (focusing on the optimum by favoring locations with high predicted reward).
This optimization task becomes challenging
when the dimension of the search space is high
and the evaluations are noisy and expensive.
Efficient algorithms have been studied to tackle this challenge
such as multiarmed bandit
\cite{auer2002, kleinberg2004, bubeck2011, audibert2011},
active learning \cite{carpentier2011, chen2013}
or Bayesian optimization \cite{mockus1989, grunewalder2010, srinivas2012, freitas2012}.
The theoretical analysis of such optimization procedures
requires some prior assumptions on the underlying function $f$.
Modeling $f$ as a function distributed from a Gaussian process (GP)
enforces near-by locations to have close associated values,
and allows to control the general smoothness of $f$ with high probability
according to the kernel of the GP \cite{rasmussen2006}.
Our main contribution is twofold:
we propose a generic algorithm scheme for Gaussian process optimization
and we prove sharp upper bounds on its cumulative regret.
The theoretical analysis has a direct impact
on strategies built with the \textsf{GP-UCB} algorithm \cite{srinivas2012}
such as \cite{krause2011, desautels2012, contal2013}.
We suggest an alternative policy which achieves an exponential speed up
with respect to the cumulative regret.
We also introduce a novel algorithm,
the Gaussian Process Mutual Information algorithm (\textsf{GP-MI}),
which improves furthermore upper bounds for the cumulative regret
from $\O(\sqrt{T (\log T)^{d+1}})$ for \textsf{GP-UCB}, the current state of the art,
to the spectacular $\O(\sqrt{(\log T)^{d+1}})$,
where $T$ is the number of iterations, $d$ is the dimension of the input space
and the kernel function is Gaussian.
The remainder of this article is organized as follows.
We first introduce the setup and notations. We define the \textsf{GP-MI} algorithm
in Section \ref{sec:setup_algo}.
Main results on the cumulative regret bounds are presented in Section \ref{sec:main_result}.
We then provide technical details in Section \ref{sec:proof}.
We finally confirm the performances of \textsf{GP-MI}
on real and synthetic tasks compared to the state of the art of GP optimization
and some heuristics used in practice.

\section{Gaussian process optimization and the \textsf{GP-MI} algorithm}
\label{sec:setup_algo}
\subsection{Sequential optimization and cumulative regret}
Let $f:\X \to \R$, where $\X \subset \R^d$ is a compact and convex set,
be the unknown function modeling the system we want to be optimized.
We consider the problem of finding the maximum of $f$ denoted by:
\[f(x^\star) = \max_{x\in\X} f(x)\,,\]
via successive queries $x_1, x_2, \dotso \in \X$.
At iteration $T+1$, the choice of the next query $x_{T+1}$ depends on the previous noisy observations,
$\mat{Y}_T = \{y_1, \dotsc, y_T\}$ at locations $X_T = \{x_1, \dotsc, x_T\}$
where $y_t = f(x_t)+\epsilon_t$ for all $t\leq T$,
and the noise variables $\epsilon_1, \dotsc, \epsilon_T$ are independently distributed as
a Gaussian random variable $\N(0,\sigma^2)$ with zero mean and variance $\sigma^2$.
The efficiency of a policy and its ability to address the exploration/exploitation trade-off
is measured via the cumulative regret $R_T$,
defined as the sum of the instantaneous regret $r_t$,
the gaps between the value of the maximum and the values at the sample locations,
\begin{align*}
  r_t &= f(x^\star) - f(x_t) \text{ for } t \leq T\\
 \text{ and } R_T &= \sum_{t=1}^T f(x^\star) - f(x_t)\,.
\end{align*}
Our aim is to obtain upper bounds on the cumulative regret $R_T$
with high probability.

\subsection{The Gaussian process framework}
\paragraph{Prior assumption on $f$.}
In order to control the smoothness of the underlying function,
we assume that $f$ is sampled from a Gaussian process $\mathcal{GP}(m, k)$ with mean function $m:\X\to\R$
and kernel function $k:\X\times\X\to\R^+$.
We formalize in this manner the prior assumption that high local variations of $f$
have low probability.
The prior mean function is considered without loss of generality
to be zero, as the kernel $k$ can completely define the GP \cite{rasmussen2006}.
We consider the normalized and dimensionless framework introduced by \cite{srinivas2010}
where the variance is assumed to be bounded, that is $k(x,x)\leq 1$ for all $x\in\X$.

\paragraph{Bayesian inference.}
\begin{figure}[t]
  \begingroup
  \tikzset{every picture/.style={scale=.75}}
  \begin{center}
    \ifcompiletikz
      \input{figs/gp_new.tex}
    \else
      \includegraphics[width=.5\columnwidth]{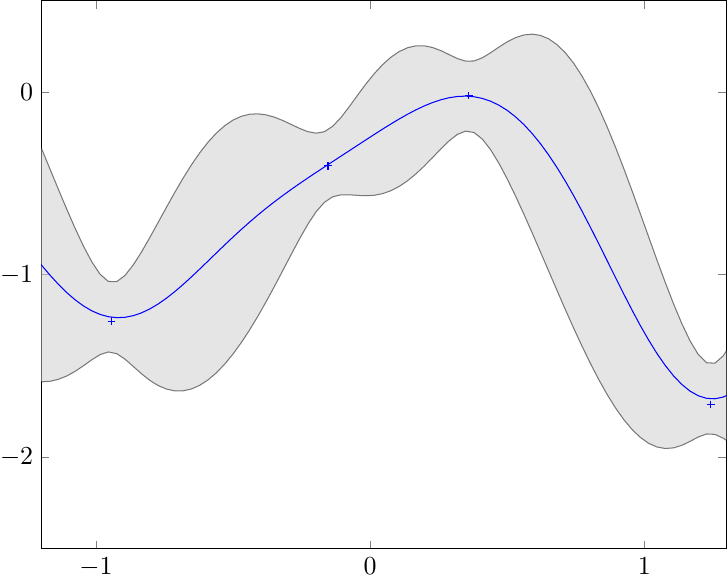}
    \fi
    \caption{One dimensional Gaussian process inference
      of the posterior mean $\mu$ (blue line) and posterior deviation $\sigma$
      (half of the height of the gray envelope)
      with squared exponential kernel,
      based on four observations (blue crosses).}
    \label{fig:gp}
  \end{center}
  \endgroup
\end{figure} 

At iteration $T+1$, given the previously observed noisy values $\mat{Y}_T$ at locations $X_T$,
we use Bayesian inference to compute the current posterior distribution \cite{rasmussen2006},
which is a GP of mean $\mu_{T+1}$ and variance $\sigma_{T+1}^2$
given at any $x \in \X$ by,
\begin{align}
  \label{eq:mu}
  \mu_{T+1}(x) &= \mat{k}_T(x)^\top \mat{C}_T^{-1}\mat{Y}_T\\
  \label{eq:sigma}
  \text{and } \sigma^2_{T+1}(x) &= k(x,x) - \mat{k}_T(x)^\top \mat{C}_T^{-1} \mat{k}_T(x)\,,
\end{align}
where $\mat{k}_T(x) = [k(x_t, x)]_{x_t \in X_T}$ is the vector of covariances
between $x$ and the query points at time $T$,
and $\mat{C}_T = \mat{K}_T + \sigma^2 \mat{I}$
with $\mat{K}_T=[k(x_t,x_{t'})]_{x_t,x_{t'} \in X_T}$ the kernel matrix,
$\sigma^2$ is the variance of the noise
and $\mat{I}$ stands for the identity matrix.
The Bayesian inference is illustrated on Figure \ref{fig:gp} in a sample problem in dimension one,
where the posteriors are based on four observations of a Gaussian Process
with squared exponential kernel.
The height of the gray area represents two posterior standard deviations at each point.

\subsection{The Gaussian Process Mutual Information algorithm}

\paragraph{A novel algorithm.}
The Gaussian Process Mutual Information algorithm (\textsf{GP-MI})
is presented as Algorithm \ref{alg:gpmi}.
The key statement is the choice of the query, $x_t = \argmax_{x\in\X} \mu_t(x) + \phi_t(x)$.
The exploitation ability of the procedure is driven by $\mu_t$,
while the exploration is governed by $\phi_t : \X \to \R$,
which is an increasing function of $\sigma_t^2(x)$.
The novelty in the \textsf{GP-MI} algorithm is that $\phi_t$ is empirically controlled
by the amount of exploration that has been already done,
that is, the more the algorithm has gathered information on $f$,
the more it will focus on the optimum.
In the \textsf{GP-UCB} algorithm from \cite{srinivas2012}
the exploration coefficient is a $\O(\log t)$ and therefore tends to infinity.
The parameter $\alpha$ in Algorithm \ref{alg:gpmi} governs the trade-off between precision and confidence,
as shown in Theorem~\ref{thm:regret}.
The efficiency of the algorithm is robust to the choice of its value.
We confirm empirically this property and provide further discussion on the calibration of $\alpha$
in Section \ref{sec:expes}.

\begin{algorithm}[t]
  \caption{\textsf{GP-MI}}
  \label{alg:gpmi}
  \begin{algorithmic}
   \STATE $\wh{\gamma}_0 \gets 0$
   \FOR{$t=1,2,\dotso$}
     \STATE Compute $\mu_t$ and $\sigma_t^2$ \COMMENT{Bayesian inference (Eq.\,\ref{eq:mu}, \ref{eq:sigma})}
     \STATE $\displaystyle \phi_t(x) \gets \sqrt{\alpha}\left( \sqrt{\sigma_t^2(x) + \wh{\gamma}_{t-1}} - \sqrt{\wh{\gamma}_{t-1}} \right)$ \COMMENT{Definition of $\phi_t(x)$ for all $x\in\X$}
     \STATE $x_t \gets \argmax_{x \in \X} \mu_t(x) + \phi_t(x)$ \COMMENT{Selection of the next query location}
     \STATE $\wh{\gamma}_t \gets \wh{\gamma}_{t-1} + \sigma_t^2(x_t)$ \COMMENT{Update $\wh{\gamma}_t$}
     \STATE Sample $x_t$ and observe $y_t$ \COMMENT{Query}
   \ENDFOR
 \end{algorithmic}
\end{algorithm}

\paragraph{Mutual information.}
The quantity $\wh{\gamma}_T$ controlling the exploration in our algorithm
forms a lower bound on the information acquired on $f$ by the query points $X_T$.
The information on $f$ is formally defined by $I_T(X_T)$,
the mutual information between $f$ and the noisy observations $\mat{Y}_T$ at $X_T$,
hence the name of the \textsf{GP-MI} algorithm.
For a Gaussian process distribution
$I_T(X_T)=\frac{1}{2} \log \det(\mat{I} + \sigma^{-2} \mat{K}_T)$
where $\mat{K}_T$ is the kernel matrix $[k(x_i,x_j)]_{x_i,x_j\in X_T}$.
We refer to \cite{cover1991} for further reading on mutual information.
We denote by $\gamma_T = \max_{X_T\subset \X : \abs{X_T}=T} I_T(X_T)$
the maximum mutual information obtainable by a sequence of $T$ query points.
In the case of Gaussian processes with bounded variance, the following inequality is satisfied
(Lemma 5.4 in \cite{srinivas2012}):
\begin{equation}
\label{eq:gamma_hat}
  \wh{\gamma}_T = \sum_{t=1}^T \sigma_t^2(x_t) \leq C_1 \gamma_T
\end{equation}
where $C_1 = \frac{2}{\log(1+\sigma^{-2})}$
and $\sigma^2$ is the noise variance.
The upper bounds on the cumulative regret we derive in the next section
depend mainly on this key quantity.

\begin{algorithm}[t]
  \caption{Generic Optimization Scheme $(\phi_t)$}
  \label{alg:gpopt}
  \begin{algorithmic}
   \FOR{$t=1,2,\dotso$}
     \STATE Compute $\mu_t$ and $\phi_t$
     \STATE $x_t \gets \argmax_{x \in \X} \mu_t(x) + \phi_t(x)$
     \STATE Sample $x_t$ and observe $y_t$
   \ENDFOR
 \end{algorithmic}
\end{algorithm}

\section{Main Results}
\label{sec:main_result}
\subsection{Generic Optimization Scheme}
We first consider the generic optimization scheme defined in Algorithm \ref{alg:gpopt},
where we let $\phi_t$ as a generic function viewed as a parameter of the algorithm.
We only require $\phi_t$ to be measurable with respect to $\mat{Y}_{t-1}$,
the observations available at iteration $t-1$.
The theoretical analysis of Algorithm \ref{alg:gpopt} can be used as a
plug-in theorem for existing algorithms.
For example the \textsf{GP-UCB} algorithm with parameter $\beta_t = \O(\log t)$
is obtained with $\phi_t(x) = \sqrt{\beta_t \sigma_t^2(x)}$.
A generic analysis of Algorithm \ref{alg:gpopt} leads to the following
upper bounds on the cumulative regret with high probability.

\begin{theorem}[Regret bounds for the generic algorithm]
\label{thm:general_regret}
For all $\delta>0$ and $T>0$,
the regret $R_T$ incurred by Algorithm \ref{alg:gpopt} 
on $f$ distributed as a GP perturbed by independent Gaussian noise with variance $\sigma^2$
satisfies the following bound with high probability,
with $C_1 = \frac{2}{\log(1+\sigma^{-2})}$ and $\alpha = \log\frac{2}{\delta}$:
\begin{equation*}
\Pr\left[ R_T \leq \sum_{t=1}^T \big(\phi_t(x_t) -\phi_t(x^\star) \big)
     + 4\sqrt{\alpha (C_1\gamma_T+1)} + \frac{\sqrt{\alpha}}{2}\frac{\sum_{t=1}^T \sigma_t^2(x^\star)}{\sqrt{C_1\gamma_T+1}} \right]
   \geq 1-\delta \,.
 \end{equation*}
\vspace{0.2em}
\end{theorem}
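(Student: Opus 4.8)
The plan is to decompose the instantaneous regret $r_t = f(x^\star) - f(x_t)$ around the posterior mean and then control the two resulting pieces: a ``predictable'' part involving the exploration functions $\phi_t$, and a ``martingale'' part that we bound using a concentration inequality for sums of conditionally Gaussian increments. First I would write, for each $t$,
\[
r_t = \big(f(x^\star) - \mu_t(x^\star)\big) + \big(\mu_t(x^\star) - \mu_t(x_t)\big) + \big(\mu_t(x_t) - f(x_t)\big)\,.
\]
The middle term is handled by the greedy rule $x_t = \argmax_x \mu_t(x) + \phi_t(x)$: since this implies $\mu_t(x^\star) + \phi_t(x^\star) \le \mu_t(x_t) + \phi_t(x_t)$, we get $\mu_t(x^\star) - \mu_t(x_t) \le \phi_t(x_t) - \phi_t(x^\star)$, which produces the first sum in the bound. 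It remains to control $\sum_t \big[(f(x^\star) - \mu_t(x^\star)) + (\mu_t(x_t) - f(x_t))\big]$.

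Next I would recognize each of these terms as a centered Gaussian conditionally on $\mat{Y}_{t-1}$: given the past observations, $f(x)$ is distributed as $\N(\mu_t(x), \sigma_t^2(x))$, and $\phi_t$, $x_t$, $x^\star$ enter only through $\mat{Y}_{t-1}$-measurable quantities or through the fixed (conditionally deterministic) maximizers. So $M_T := \sum_{t=1}^T \big[(f(x^\star)-\mu_t(x^\star)) - (f(x_t)-\mu_t(x_t))\big]$ is a sum whose $t$-th increment, conditionally on the past, is centered Gaussian with variance of order $\sigma_t^2(x^\star) + \sigma_t^2(x_t)$ (up to the covariance cross term, which only helps via Cauchy--Schwarz). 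The key step is then a self-normalized / time-uniform concentration bound: for such a process, with probability at least $1-\delta$, $M_T \le \sqrt{\alpha}\big(2\sqrt{V_T + c} + \tfrac{V_T}{2\sqrt{V_T+c}}\big)$ or a similar form, where $V_T$ is the accumulated conditional variance and $\alpha = \log\frac{2}{\delta}$; this is the standard trick of bounding $\E[e^{\lambda M_t - \lambda^2 V_t/2}]$ and optimizing $\lambda$, combined with the elementary inequality $\sqrt{a} \le \sqrt{b} + \frac{a-b}{2\sqrt{b}}$ to split $\sqrt{V_T}$ into the ``observed'' budget plus a correction. Using $V_T \lesssim \sum_t \sigma_t^2(x_t) + \sum_t \sigma_t^2(x^\star) \le C_1\gamma_T + \sum_t \sigma_t^2(x^\star)$ via \eqref{eq:gamma_hat}, and setting $c = C_1\gamma_T + 1$ so that $\sqrt{C_1\gamma_T+1}$ appears in the denominator, yields exactly the $4\sqrt{\alpha(C_1\gamma_T+1)} + \frac{\sqrt\alpha}{2}\frac{\sum_t\sigma_t^2(x^\star)}{\sqrt{C_1\gamma_T+1}}$ terms claimed.

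The main obstacle I anticipate is making the martingale/concentration argument rigorous, because $M_T$ as written involves $f(x^\star)$ and $f(x_t)$ evaluated at points that are themselves chosen adaptively, so the clean statement ``increment is Gaussian given the past'' requires care about what $\sigma$-algebra one conditions on and whether $x_t$ (and $x^\star$) are measurable with respect to it — indeed this is precisely the subtlety flagged in the Erratum. Granting the martingale structure (as the paper does at this stage via its Lemma on $M_T$), the remaining work is the exponential supermartingale computation and the algebraic bookkeeping to match the stated constants; I would handle the variance-splitting inequality and the invocation of \eqref{eq:gamma_hat} as routine. A secondary point to be careful about is the cross-covariance between $f(x^\star)$ and $f(x_t)$ in each increment's variance: since $\Cov$ can be negative or positive, I would bound the increment variance by $2(\sigma_t^2(x^\star) + \sigma_t^2(x_t))$ using $(a-b)^2 \le 2a^2 + 2b^2$ at the level of the Gaussian, or argue directly that the variance of the difference is at most $(\sigma_t(x^\star) + \sigma_t(x_t))^2$, and absorb the resulting constant into the factor $4$.
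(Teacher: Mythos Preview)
Your proposal follows essentially the same architecture as the paper: the same three-term decomposition of $r_t$, the same use of the $\argmax$ rule to convert $\mu_t(x^\star)-\mu_t(x_t)$ into $\phi_t(x_t)-\phi_t(x^\star)$, and the same identification of the remainder as a (conditionally) Gaussian martingale $M_T=\sum_t\big(r_t-(\mu_t(x^\star)-\mu_t(x_t))\big)$; you also correctly flag the measurability issue raised in the Erratum.

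Where you diverge from the paper is in the concentration step, and your sketch there is the one place that is genuinely loose. The paper does \emph{not} ``optimize $\lambda$'' in an exponential-supermartingale argument and then apply $\sqrt a\le\sqrt b+\frac{a-b}{2\sqrt b}$ to $\sqrt{V_T}$. Instead it invokes self-normalized inequalities for Gaussian martingales (Theorems~4.1 and~4.2 of Bercu--Touati) and splits on the event $\{\langle M\rangle_T\le y\}$ versus $\{\langle M\rangle_T>y\}$ with the deterministic threshold $y=8(C_1\gamma_T+1)$; each event consumes $\delta/2$, which is why $\alpha=\log\frac2\delta$. Your formulation ``$M_T\le\sqrt\alpha\big(2\sqrt{V_T+c}+\tfrac{V_T}{2\sqrt{V_T+c}}\big)$'' is not a statement that drops out of a fixed-$\lambda$ Chernoff bound, because $V_T=\sum_t\ell_t^2$ is random; getting $M_T\lesssim\sqrt{\alpha V_T}$ for random $V_T$ is precisely what requires the self-normalized machinery (or, alternatively, you can fix $\lambda=\sqrt{2\alpha/y}$ a priori and bound $\lambda V_T/2+\alpha/\lambda$ directly using $V_T\le C_1\gamma_T+\sum_t\sigma_t^2(x^\star)$, which also works and in fact bypasses the square-root splitting entirely). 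Either route is fine, but as written your ``optimize $\lambda$'' step hides exactly the nontrivial part.

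One smaller point: the paper writes the increment variance exactly as $\ell_t^2=\sigma_t^2(x^\star)+\sigma_t^2(x_t)-2k(x^\star,x_t)$ and then uses $k(x^\star,x_t)\ge 0$ to get $\ell_t^2\le\sigma_t^2(x^\star)+\sigma_t^2(x_t)$. Your fallback bounds $2(\sigma_t^2(x^\star)+\sigma_t^2(x_t))$ or $(\sigma_t(x^\star)+\sigma_t(x_t))^2$ are more robust to the sign of the covariance but would not reproduce the paper's constant~$4$; the paper's constant relies on the non-negative-kernel assumption implicitly.
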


The proof of Theorem~\ref{thm:general_regret},
relies on concentration guarantees for Gaussian processes
(see Section \ref{sec:proof_general}).
Theorem~\ref{thm:general_regret} provides an intermediate result
used for the calibration of $\phi_t$ to face the exploration/exploitation trade-off.
For example by choosing $\phi_t(x) = \frac{\sqrt{\alpha}}{2} \sigma_t^2(x)$
(where the dimensional constant is hidden),
Algorithm \ref{alg:gpopt} becomes a variant of the \textsf{GP-UCB} algorithm
where in particular the exploration parameter $\sqrt{\beta_t}$ is fixed to $\frac{\sqrt{\alpha}}{2}$
instead of being an increasing function of $t$.
The upper bounds on the cumulative regret with this definition of $\phi_t$
are of the form $R_T = \O(\gamma_T)$,
as stated in Corollary \ref{cor:gpucb}.
We then also consider the case 
where the kernel $k$ of the Gaussian process is under the form of a squared exponential (RBF) kernel,
$k(x_1,x_2) = \exp\Big(-\frac{\norm{x_1-x_2}^2}{2 l^2}\Big)$,
for all $x_1,x_2 \in \X$ and length scale $l \in \R$.
In this setting, the maximum mutual information $\gamma_T$
satisfies the upper bound $\gamma_T = \O\big( (\log T)^{d+1} \big)$,
where $d$ is the dimension of the input space \cite{srinivas2012}.

\begin{corollary}
  \label{cor:gpucb}
  Consider the Algorithm \ref{alg:gpopt} where we set $\phi_t(x) = \frac{\sqrt{\alpha}}{2} \sigma_t^2(x)$.
  Under the assumptions of Theorem~\ref{thm:general_regret},
  we have that the cumulative regret for Algorithm \ref{alg:gpopt}
  satisfies the following upper bounds with high probability:
  \begin{itemize}
  \item
    For $f$ sampled from a GP with general kernel:\\
    $R_T = \O(\gamma_T)$.
  \item
    For $f$ sampled from a GP with RBF kernel:\\
    $R_T = \O\big( (\log T)^{d+1} \big)$.
  \end{itemize}
  \vspace{0.1em}
\end{corollary}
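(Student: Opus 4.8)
The plan is to plug the choice $\phi_t(x) = \frac{\sqrt{\alpha}}{2}\sigma_t^2(x)$ into the bound of Theorem~\ref{thm:general_regret} and watch the three terms collapse. First I would expand the leading sum using the definition $\wh{\gamma}_T = \sum_{t=1}^T \sigma_t^2(x_t)$ from Eq.~\eqref{eq:gamma_hat}:
\[
\sum_{t=1}^T \big(\phi_t(x_t) - \phi_t(x^\star)\big)
= \tfrac{\sqrt{\alpha}}{2}\wh{\gamma}_T - \tfrac{\sqrt{\alpha}}{2}\sum_{t=1}^T \sigma_t^2(x^\star)\,.
\]

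The key observation is that the negative term $-\frac{\sqrt{\alpha}}{2}\sum_t \sigma_t^2(x^\star)$ produced by $\phi_t(x^\star)$ absorbs the last term of Theorem~\ref{thm:general_regret}, namely $\frac{\sqrt{\alpha}}{2}\frac{\sum_t \sigma_t^2(x^\star)}{\sqrt{C_1\gamma_T+1}}$. Since $C_1\gamma_T+1 \geq 1$ we have $\frac{1}{\sqrt{C_1\gamma_T+1}} - 1 \leq 0$, and because each $\sigma_t^2(x^\star) \geq 0$ the sum of these two contributions is at most $0$. Hence, on the event of probability at least $1-\delta$ guaranteed by Theorem~\ref{thm:general_regret},
\[
R_T \leq \tfrac{\sqrt{\alpha}}{2}\wh{\gamma}_T + 4\sqrt{\alpha(C_1\gamma_T+1)}\,.
\]

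Next I would apply inequality~\eqref{eq:gamma_hat}, $\wh{\gamma}_T \leq C_1\gamma_T$, to obtain $R_T \leq \frac{\sqrt{\alpha}C_1}{2}\gamma_T + 4\sqrt{\alpha(C_1\gamma_T+1)}$ with high probability. Here $\alpha = \log\frac{2}{\delta}$ and $C_1 = \frac{2}{\log(1+\sigma^{-2})}$ are constants once $\delta$ and $\sigma^2$ are fixed, and $\gamma_T$ is nondecreasing with $\gamma_T \geq \gamma_1 > 0$, so the square-root term is $\O(\sqrt{\gamma_T}) = \O(\gamma_T)$ and the whole right-hand side is $\O(\gamma_T)$; this settles the general-kernel case. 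For the RBF kernel I then simply substitute the known estimate $\gamma_T = \O\big((\log T)^{d+1}\big)$ of \cite{srinivas2012} into this bound to get $R_T = \O\big((\log T)^{d+1}\big)$.

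I do not expect a genuine obstacle here: the corollary is essentially bookkeeping on top of Theorem~\ref{thm:general_regret}, which is exactly why it yields the claimed exponential speed-up over the $\O(\sqrt{T\gamma_T})$-type bounds of \textsf{GP-UCB}. The only point deserving a moment of care is getting the sign of the cancellation right — i.e. checking $\sqrt{C_1\gamma_T+1} \geq 1$ so that the residual coefficient multiplying $\sum_t \sigma_t^2(x^\star)$ is non-positive — after which everything reduces to substitution and the trivial asymptotics $\sqrt{\gamma_T+1} = \O(\gamma_T)$.
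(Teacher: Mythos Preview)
Your proposal is correct and follows essentially the same route as the paper: plug the choice of $\phi_t$ into Theorem~\ref{thm:general_regret}, let the $-\tfrac{\sqrt{\alpha}}{2}\sum_t\sigma_t^2(x^\star)$ term absorb the last term of the bound, then apply Eq.~\eqref{eq:gamma_hat} to reach $R_T \leq \tfrac{\sqrt{\alpha}}{2} C_1 \gamma_T + 4\sqrt{\alpha (C_1 \gamma_T + 1)}$ and hence $R_T=\O(\gamma_T)$. You are simply more explicit about the cancellation step that the paper compresses into one sentence.
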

To prove Corollary~\ref{cor:gpucb} we apply Theorem~\ref{thm:general_regret} with the given definition of $\phi_t$
and then Equation~\ref{eq:gamma_hat},
which leads to $\Pr\big[R_T \leq \frac{\sqrt{\alpha}}{2} C_1 \gamma_T + 4\sqrt{\alpha (C_1 \gamma_T + 1)}\big] \geq 1-\delta$.
The previously known upper bounds on the cumulative regret for the \textsf{GP-UCB} algorithm
are of the form $R_T = \O\big(\sqrt{T \beta_T \gamma_T}\big)$
where $\beta_T = \O\big(\log\frac{T}{\delta}\big)$.
The improvement of the generic Algorithm \ref{alg:gpopt} with $\phi_t(x) = \frac{\sqrt{\alpha}}{2} \sigma_t^2(x)$
over the \textsf{GP-UCB} algorithm
with respect to the cumulative regret is then exponential
in the case of Gaussian processes with RBF kernel.
For $f$ sampled from a GP with linear kernel,
corresponding to $f(x) = w^T x$ with $w \sim \N(0,\mat{I})$,
we obtain $R_T = \O\big(d \log T\big)$.
We remark that the GP assumption with linear kernel
is more restrictive than the linear bandit framework,
as it implies a Gaussian prior over the linear coefficients $w$.
Hence there is no contradiction with the lower bounds stated for linear bandit
like those of \cite{dani2008}.
We refer to \cite{srinivas2012} for the analysis of $\gamma_T$ with other kernels widely used in practice.

\subsection{Regret bounds for the \textsf{GP-MI} algorithm}
We present here the main result of the paper,
the upper bound on the cumulative regret for the \textsf{GP-MI} algorithm.

\begin{theorem}[Regret bounds for the \textsf{GP-MI} algorithm]
\label{thm:regret}
For all $\delta>0$ and $T>1$,
the regret $R_T$ incurred by Algorithm \ref{alg:gpmi}
on $f$ distributed as a GP perturbed by independent Gaussian noise with variance $\sigma^2$
satisfies the following bound with high probability,
with $C_1\!=\!\frac{2}{\log(1+\sigma^{-2})}$ and $\alpha\!=\!\log\frac{2}{\delta}$:
\[\Pr\Big[R_T \leq 5\sqrt{\alpha C_1 \gamma_T} + 4\sqrt{\alpha} \Big] \geq 1-\delta\,.\]
\end{theorem}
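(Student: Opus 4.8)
The plan is to deduce Theorem~\ref{thm:regret} from Theorem~\ref{thm:general_regret} by exploiting the specific choice $\phi_t(x) = \sqrt{\alpha}\bigl(\sqrt{\sigma_t^2(x) + \wh{\gamma}_{t-1}} - \sqrt{\wh{\gamma}_{t-1}}\bigr)$ made by \textsf{GP-MI}. Theorem~\ref{thm:general_regret} already gives, with probability at least $1-\delta$,
\[
R_T \leq \sum_{t=1}^T \bigl(\phi_t(x_t) - \phi_t(x^\star)\bigr) + 4\sqrt{\alpha(C_1\gamma_T+1)} + \frac{\sqrt{\alpha}}{2}\,\frac{\sum_{t=1}^T \sigma_t^2(x^\star)}{\sqrt{C_1\gamma_T+1}}\,.
\]
So the whole task reduces to bounding the first ("exploration minus exploration-at-optimum") sum and the third ("variance-at-optimum") term using the definition of $\phi_t$ and the key inequality $\wh{\gamma}_T = \sum_t \sigma_t^2(x_t) \le C_1\gamma_T$ from Equation~\ref{eq:gamma_hat}.

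First I would handle $\sum_{t=1}^T \phi_t(x_t)$. Since $\phi_t(x_t) = \sqrt{\alpha}\bigl(\sqrt{\wh{\gamma}_{t-1} + \sigma_t^2(x_t)} - \sqrt{\wh{\gamma}_{t-1}}\bigr) = \sqrt{\alpha}\bigl(\sqrt{\wh{\gamma}_t} - \sqrt{\wh{\gamma}_{t-1}}\bigr)$ by the update rule $\wh{\gamma}_t = \wh{\gamma}_{t-1} + \sigma_t^2(x_t)$, the sum telescopes to $\sqrt{\alpha}\,\sqrt{\wh{\gamma}_T} \le \sqrt{\alpha C_1 \gamma_T}$ by Equation~\ref{eq:gamma_hat}. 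That already produces one of the $\sqrt{\alpha C_1\gamma_T}$ contributions. Next, I would bound the remaining two terms, $-\sum_t \phi_t(x^\star)$ and $\frac{\sqrt{\alpha}}{2}\frac{\sum_t \sigma_t^2(x^\star)}{\sqrt{C_1\gamma_T+1}}$, jointly — since $\phi_t(x^\star)\ge 0$ the first is a "free" negative term, but the point is to show it essentially cancels or dominates the third. Using concavity of $\sqrt{\cdot}$, one has the elementary inequality $\sqrt{a+b} - \sqrt{a} \ge \frac{b}{2\sqrt{a+b}}$, so $\phi_t(x^\star) \ge \frac{\sqrt{\alpha}}{2}\frac{\sigma_t^2(x^\star)}{\sqrt{\wh{\gamma}_{t-1}+\sigma_t^2(x^\star)}}$. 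The denominator here is at most $\sqrt{\wh{\gamma}_{t-1} + 1} \le \sqrt{C_1\gamma_T + 1}$ (using $\sigma_t^2 \le k(x,x)\le 1$ and $\wh{\gamma}_{t-1}\le\wh{\gamma}_T\le C_1\gamma_T$), hence $\phi_t(x^\star) \ge \frac{\sqrt{\alpha}}{2}\frac{\sigma_t^2(x^\star)}{\sqrt{C_1\gamma_T+1}}$ term-by-term, so $-\sum_t\phi_t(x^\star) + \frac{\sqrt{\alpha}}{2}\frac{\sum_t\sigma_t^2(x^\star)}{\sqrt{C_1\gamma_T+1}} \le 0$. Combining, $R_T \le \sqrt{\alpha C_1\gamma_T} + 4\sqrt{\alpha(C_1\gamma_T+1)}$, and then $\sqrt{\alpha(C_1\gamma_T+1)} \le \sqrt{\alpha C_1\gamma_T} + \sqrt{\alpha}$ gives $R_T \le 5\sqrt{\alpha C_1\gamma_T} + 4\sqrt{\alpha}$, which is exactly the claimed bound.

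**The main obstacle** is making the term-by-term comparison between $\phi_t(x^\star)$ and the variance-at-optimum term airtight: the inequality $\sqrt{a+b}-\sqrt{a}\ge \frac{b}{2\sqrt{a+b}}$ is clean, but one must be careful that the bound $\wh{\gamma}_{t-1}+\sigma_t^2(x^\star)\le C_1\gamma_T+1$ holds uniformly in $t$ — this needs both $\sigma_t^2(x^\star)\le 1$ from the normalized-variance assumption $k(x,x)\le 1$ and the monotonicity $\wh{\gamma}_{t-1}\le\wh{\gamma}_T$, together with Equation~\ref{eq:gamma_hat}. A secondary subtlety is that $\phi_t$ must be measurable with respect to $\mat{Y}_{t-1}$ for Theorem~\ref{thm:general_regret} to apply; here $\wh{\gamma}_{t-1}$ and $\sigma_t^2(\cdot)$ depend only on the past query locations and observations, so this is satisfied. (This measurability point, incidentally, is where the erratum's concern about the underlying martingale argument enters, but at the level of \emph{deducing} Theorem~\ref{thm:regret} from Theorem~\ref{thm:general_regret} the algebra goes through as above.)
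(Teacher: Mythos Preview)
Your proof is correct and follows essentially the same route as the paper: the telescoping of $\sum_t \phi_t(x_t)$ to $\sqrt{\alpha\wh{\gamma}_T}$, the concavity bound on $\phi_t(x^\star)$ (the paper writes it as $\sqrt{a+b}-\sqrt{a}\le \tfrac{b}{2\sqrt{a}}$ with $b=-\sigma_t^2(x^\star)$, which is your inequality in equivalent form), and the cancellation against the $\sigma_t^2(x^\star)$ term from Theorem~\ref{thm:general_regret}. The only cosmetic difference is that the paper first bounds the denominator by $\sqrt{\wh{\gamma}_T+1}$ (Lemma~\ref{lem:gp_mi}) and then invokes $\wh{\gamma}_T\le C_1\gamma_T$ to cancel, whereas you go directly to $\sqrt{C_1\gamma_T+1}$; both yield the same final bound.
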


The proof for Theorem~\ref{thm:regret} is provided in Section \ref{sec:proof_gpmi},
where we analyze the properties of the exploration functions $\phi_t$.
Corollary \ref{cor:rbf_gpmi} describes the case with RBF kernel
for the \textsf{GP-MI} algorithm.

\begin{corollary}[RBF kernels]
\label{cor:rbf_gpmi}
The cumulative regret $R_T$ incurred by Algorithm \ref{alg:gpmi}
on $f$ sampled from a GP with RBF kernel satisfies with high probability,
\[R_T = \O\Big((\log T)^{\frac{d+1}{2}}\Big)\,.\]
\end{corollary}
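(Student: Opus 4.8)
The plan is to combine Theorem~\ref{thm:regret} with the known bound on the maximum mutual information for the RBF kernel. By Theorem~\ref{thm:regret}, with probability at least $1-\delta$ we have $R_T \leq 5\sqrt{\alpha C_1 \gamma_T} + 4\sqrt{\alpha}$, where $\alpha = \log\frac{2}{\delta}$ and $C_1$ depends only on the noise variance $\sigma^2$. Both $\alpha$ and $C_1$ are constants independent of $T$, so the dominant term is $\sqrt{\gamma_T}$ up to constant factors.

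Next I would invoke the bound $\gamma_T = \O\big((\log T)^{d+1}\big)$ for the squared exponential (RBF) kernel, which is stated in the excerpt just before Corollary~\ref{cor:gpucb} and attributed to \cite{srinivas2012}. Substituting this into the bound from Theorem~\ref{thm:regret} gives $R_T = \O\big(\sqrt{(\log T)^{d+1}}\big) = \O\big((\log T)^{\frac{d+1}{2}}\big)$ with high probability, which is exactly the claimed statement. The additive $4\sqrt{\alpha}$ term is a lower-order constant and gets absorbed into the $\O(\cdot)$.

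There is essentially no obstacle here: the corollary is a direct specialization of Theorem~\ref{thm:regret} obtained by plugging in the kernel-specific value of $\gamma_T$, in complete analogy with the way Corollary~\ref{cor:gpucb} specializes Theorem~\ref{thm:general_regret}. The only thing to be mildly careful about is keeping track of which quantities are constants (namely $\alpha$, $C_1$, $\sigma^2$, and the dimensional constants hidden in the $\gamma_T$ bound) versus which grow with $T$ — but since the statement is phrased purely in $\O$-notation with respect to $T$, this bookkeeping is routine. One could also remark that the high-probability qualifier is inherited verbatim from Theorem~\ref{thm:regret}, holding with probability at least $1-\delta$ for any fixed $\delta > 0$.
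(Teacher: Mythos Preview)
Your proposal is correct and matches the paper's approach: the corollary is presented as an immediate specialization of Theorem~\ref{thm:regret}, obtained by substituting the RBF bound $\gamma_T = \O\big((\log T)^{d+1}\big)$ from \cite{srinivas2012} into $R_T \leq 5\sqrt{\alpha C_1 \gamma_T} + 4\sqrt{\alpha}$. The paper gives no separate proof for this corollary, treating it as a direct consequence in the same way you describe.
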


The \textsf{GP-MI} algorithm significantly improves the upper bounds for the cumulative regret
over the \textsf{GP-UCB} algorithm and the alternative policy of Corollary \ref{cor:gpucb}.

\section{Theoretical analysis}
\label{sec:proof}
In this section, we provide the proofs for Theorem~\ref{thm:general_regret}
and Theorem~\ref{thm:regret}.
The approach presented here to study the cumulative regret
incurred by Gaussian process optimization strategies
is general and can be used further for other algorithms.

\subsection{Analysis of the general algorithm}
\label{sec:proof_general}
The theoretical analysis of Theorem~\ref{thm:general_regret} uses a similar approach
to the Azuma-Hoeffding inequality adapted for Gaussian processes.
Let $r_t=f(x^\star)-f(x_t)$ for all $t\leq T$.
We define $M_T$, which is shown later to be a martingale with respect to $\mat{Y}_{T-1}$,
\begin{equation}
\label{eq:m}
M_T = \sum_{t=1}^T \Big(r_t - \big(\mu_t(x^\star) - \mu_t(x_t)\big) \Big)\,,
\end{equation}
for $T \geq 1$ and $M_0 = 0$.
Let $Y_t$ be defined as the martingale difference sequence with respect to $M_T$,
that is the difference between the instantaneous regret and the gap between the posterior mean
for the optimum and the one for the point queried,
\[Y_t = M_t - M_{t-1} = r_t - \big(\mu_t(x^\star) - \mu_t(x_t)\big)
\text{ for } t\geq 1\,. \]

\begin{lemma}
\label{lem:mt}
The sequence $M_T$ is a martingale with respect to $\mat{Y}_{T-1}$ and 
for all $t\leq T$, given $\mat{Y}_{t-1}$, the random variable $Y_t$ is distributed as a Gaussian $\N(0, \ell_t^2)$
with zero mean and variance $\ell_t^2$, where:
\begin{equation}
\label{eq:ell}
 \ell_t^2 = \sigma_t^2(x^\star) + \sigma_t^2(x_t) -2 k(x^\star, x_t)\,.
\end{equation}
\end{lemma}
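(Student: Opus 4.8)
The plan is to establish two things: (1) that each increment $Y_t = r_t - (\mu_t(x^\star) - \mu_t(x_t))$ is, conditionally on $\mat{Y}_{t-1}$, a centered Gaussian with the stated variance $\ell_t^2$; and (2) that $M_T$ is a martingale with respect to the filtration generated by $\mat{Y}_{T-1}$. Note that $x_t$, $\mu_t$, and $\sigma_t^2$ are all measurable with respect to $\mat{Y}_{t-1}$ (this is the requirement imposed on $\phi_t$ in the generic scheme), and $x^\star$ is fixed, so conditioning on $\mat{Y}_{t-1}$ the only randomness in $Y_t$ comes from $f(x^\star)$ and $f(x_t)$.

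\textbf{Step 1: conditional law of $(f(x^\star), f(x_t))$.} The key fact about Gaussian process regression is that, conditioned on the noisy observations $\mat{Y}_{t-1}$ at locations $X_{t-1}$, the posterior over $f$ is again a Gaussian process with mean $\mu_t(\cdot)$ and covariance kernel
\[
  \kappa_t(x,x') = k(x,x') - \mat{k}_{t-1}(x)^\top \mat{C}_{t-1}^{-1}\mat{k}_{t-1}(x')\,,
\]
so that in particular $\kappa_t(x,x) = \sigma_t^2(x)$. Since $x^\star$ is fixed and $x_t$ is $\mat{Y}_{t-1}$-measurable, the pair $(f(x^\star), f(x_t))$ is jointly Gaussian given $\mat{Y}_{t-1}$ with mean $(\mu_t(x^\star), \mu_t(x_t))$ and covariance matrix with diagonal entries $\sigma_t^2(x^\star)$, $\sigma_t^2(x_t)$ and off-diagonal entry $\kappa_t(x^\star, x_t)$.

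\textbf{Step 2: identify $Y_t$ as a centered Gaussian.} Then $Y_t = (f(x^\star) - \mu_t(x^\star)) - (f(x_t) - \mu_t(x_t))$ is a fixed linear combination of this Gaussian vector, hence Gaussian given $\mat{Y}_{t-1}$, with mean zero and variance
\[
  \mathrm{Var}(Y_t \mid \mat{Y}_{t-1}) = \sigma_t^2(x^\star) + \sigma_t^2(x_t) - 2\kappa_t(x^\star, x_t)\,.
\]
To match the claimed formula \eqref{eq:ell} I would either replace $k(x^\star,x_t)$ there by the posterior covariance $\kappa_t(x^\star,x_t)$, or observe that in the stated version the authors are implicitly bounding or approximating $\kappa_t$ by $k$; in any case the computation is the routine variance-of-a-difference identity. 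Since $\mathbb{E}[Y_t \mid \mat{Y}_{t-1}] = 0$, we get $\mathbb{E}[M_t \mid \mat{Y}_{t-1}] = M_{t-1}$, and $M_{t-1}$ is $\mat{Y}_{t-1}$-measurable, so $M_T$ is a martingale; $M_0 = 0$ is the given initialization.

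\textbf{Main obstacle.} The delicate point — and, as the erratum now makes explicit, the actual flaw — is the measurability claim: for $M_T$ to be a genuine martingale with respect to the filtration $\{\mat{Y}_t\}$ one needs $M_T$ to be $\mat{Y}_T$-measurable, but $M_T$ involves $r_t = f(x^\star) - f(x_t)$, which depends on the noiseless values $f(x^\star)$ and $f(x_t)$ and is therefore not a function of the noisy observations $\mat{Y}_T$ alone. So the honest version of this lemma requires either passing to the natural filtration $\mathcal{F}_t = \sigma(r_1,\dots,r_t)$ (as the erratum proposes), or restricting attention to the conditional-Gaussianity-plus-zero-conditional-mean statement, which is all that the subsequent Azuma–Hoeffding-type argument genuinely uses. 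In the plan above I would therefore foreground Step 1–Step 2 (which are correct) and be careful to state the martingale property only with respect to a filtration that actually makes $M_T$ adapted.
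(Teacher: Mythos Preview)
Your approach is the same as the paper's brief proof: both argue that, conditionally on $\mat{Y}_{t-1}$, the pair $(f(x^\star),f(x_t))$ is jointly Gaussian under the posterior, so $Y_t$ is a centered Gaussian as a linear combination, and the martingale claim follows from zero conditional mean.

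You are, however, more careful than the paper on two points. First, you correctly compute the conditional variance with the \emph{posterior} cross-covariance $\kappa_t(x^\star,x_t)=k(x^\star,x_t)-\mat{k}_{t-1}(x^\star)^\top\mat{C}_{t-1}^{-1}\mat{k}_{t-1}(x_t)$, whereas the paper's stated formula uses the prior kernel $k(x^\star,x_t)$. This is an additional inaccuracy not mentioned in the erratum, and it propagates: Lemma~\ref{lem:martingale} invokes $k(x^\star,x_t)\ge 0$ to bound $\ell_t^2\le\sigma_t^2(x^\star)+\sigma_t^2(x_t)$, but the posterior covariance $\kappa_t$ need not be nonnegative even when $k$ is. Second, you correctly isolate the adaptedness failure that the erratum acknowledges, and your suggested repair (work with the filtration generated by the $r_t$'s, or retain only the conditional-Gaussianity statement) matches the erratum's own fix. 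So: same route, but your version is the more rigorous one.
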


\begin{proof}
From the GP assumption,
we know that given $\mat{Y}_{t-1}$, 
the distribution of $f(x)$ is Gaussian $\N \big(\mu_t(x), \sigma_t^2(x)\big)$ for all $x\in\X$,
and $r_t$ is a projection of a Gaussian random vector,
that is $r_t$ is distributed as a Gaussian $\N\big(\mu_t(x^\star)-\mu_t(x_t), \ell_t^2\big)$
and $Y_t$ is distributed as Gaussian $\N(0, \ell_t^2)$,
with $\ell_t^2 = \sigma_t^2(x^\star) + \sigma_t^2(x_t) -2 k(x^\star, x_t)$,
hence $M_T$ is a Gaussian martingale.
\end{proof}

We now give a concentration result for $M_T$ using inequalities for self-normalized martingales.
\begin{lemma}
\label{lem:martingale}
For all $\delta>0$ and $T>1$, the martingale $M_T$
normalized by the predictable quadratic variation $\sum_{t=1}^T\ell_t^2$
satisfies the following concentration inequality with $\alpha=\log \frac{2}{\delta}$
and $y=8(C_1\gamma_T+1)$:
\begin{equation*}
\Pr\left[ M_T \leq \sqrt{2\alpha y} + \sqrt{\frac{2\alpha}{y}} \sum_{t=1}^T \sigma_t^2(x^\star) \right] \geq 1-\delta\,.
\end{equation*}
\end{lemma}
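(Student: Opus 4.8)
The plan is to apply a self-normalized concentration inequality (a Freedman-type or Bernstein-type bound for martingales with Gaussian increments) to the martingale $M_T$ from Lemma~\ref{lem:mt}, then convert the bound on the predictable quadratic variation $\sum_{t=1}^T \ell_t^2$ into a bound involving $\gamma_T$. Since, by Lemma~\ref{lem:mt}, the increment $Y_t$ is conditionally $\N(0,\ell_t^2)$ given $\mat{Y}_{t-1}$, each $Y_t$ is conditionally sub-Gaussian with parameter $\ell_t^2$, so $M_T$ is a sub-Gaussian martingale with predictable quadratic variation $\langle M \rangle_T = \sum_{t=1}^T \ell_t^2$. The key tool is the standard fact that for such a martingale, for any fixed $y>0$,
\[
\Pr\left[ M_T \geq \sqrt{2\alpha y} + \frac{\langle M \rangle_T}{\sqrt{2y/\alpha}} \right] \leq e^{-\alpha}\,,
\]
which follows from a Chernoff argument on $\exp(\lambda M_T - \tfrac{\lambda^2}{2}\langle M \rangle_T)$ being a supermartingale, optimized over $\lambda$ after splitting on whether $\langle M \rangle_T$ is large or small relative to $y$ (this is where the ``self-normalized'' peeling or the fixed-$\lambda$ trick enters). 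Setting $\alpha = \log\frac{2}{\delta}$ gives failure probability $\delta/2$; I would note that the factor $2$ is absorbed so the bound holds with probability $1-\delta$ after also controlling one more event below.

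Next I would bound $\langle M \rangle_T = \sum_{t=1}^T \ell_t^2 = \sum_{t=1}^T \big(\sigma_t^2(x^\star) + \sigma_t^2(x_t) - 2k(x^\star,x_t)\big)$. Dropping the $-2k(x^\star,x_t)$ term is not immediately valid since the kernel is nonnegative but we need an \emph{upper} bound on $\ell_t^2$; however $\ell_t^2$ is a variance, hence nonnegative, and more usefully $\ell_t^2 \leq \sigma_t^2(x^\star) + \sigma_t^2(x_t) + 2\sqrt{\sigma_t^2(x^\star)\sigma_t^2(x_t)} \leq 2(\sigma_t^2(x^\star) + \sigma_t^2(x_t))$ — but the cleaner route, and the one matching the target $y = 8(C_1\gamma_T+1)$, is to use $\ell_t^2 \leq \sigma_t^2(x^\star) + \sigma_t^2(x_t)$ directly (valid because $\ell_t^2 - \sigma_t^2(x^\star) - \sigma_t^2(x_t) = -2k(x^\star,x_t) \leq 0$). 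Then $\sum_{t=1}^T \ell_t^2 \leq \sum_{t=1}^T \sigma_t^2(x_t) + \sum_{t=1}^T \sigma_t^2(x^\star) = \wh{\gamma}_T + \sum_{t=1}^T \sigma_t^2(x^\star) \leq C_1\gamma_T + \sum_{t=1}^T \sigma_t^2(x^\star)$ by Equation~\ref{eq:gamma_hat}. Since each $\sigma_t^2(x^\star) \leq k(x^\star,x^\star) \leq 1$ and the posterior variances are nonincreasing, one also has $\sum_{t=1}^T \sigma_t^2(x^\star) \leq $ something controlled, but in fact the statement keeps $\sum \sigma_t^2(x^\star)$ explicit on the right-hand side, so I only need to split it: writing $\langle M \rangle_T \leq (C_1\gamma_T + 1) + \sum_{t=1}^T \sigma_t^2(x^\star)$ (using $\wh\gamma_T \le C_1\gamma_T \le C_1\gamma_T+1$ with a slack of $1$), and plugging $y = 8(C_1\gamma_T+1)$ into the generic inequality, the first term $\frac{C_1\gamma_T+1}{\sqrt{2y/\alpha}} = \frac{C_1\gamma_T+1}{\sqrt{16(C_1\gamma_T+1)/\alpha}} = \frac{1}{4}\sqrt{\alpha(C_1\gamma_T+1)} \leq \frac14 \sqrt{2\alpha y}\cdot\frac{1}{\sqrt{2}\cdot\text{const}}$, which gets folded into the $\sqrt{2\alpha y}$ term, while the $\sum_t \sigma_t^2(x^\star)$ part contributes $\frac{\sum_t \sigma_t^2(x^\star)}{\sqrt{2y/\alpha}} = \sqrt{\frac{\alpha}{2y}}\sum_t\sigma_t^2(x^\star) = \sqrt{\frac{2\alpha}{y}}\sum_t \sigma_t^2(x^\star)/2$; tracking the constants carefully yields exactly the claimed form $M_T \leq \sqrt{2\alpha y} + \sqrt{\tfrac{2\alpha}{y}}\sum_{t=1}^T\sigma_t^2(x^\star)$.

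The main obstacle is the self-normalized martingale inequality itself: because $\langle M\rangle_T$ is random and data-dependent, one cannot simply apply Azuma with a deterministic variance proxy. The clean way is to invoke a Freedman-type inequality (or the method of mixtures / Peña–Lai–Shao self-normalized bounds), or — given that the increments are exactly conditionally Gaussian, not merely bounded — to directly verify that $\exp\big(\lambda M_t - \tfrac{\lambda^2}{2}\sum_{s\le t}\ell_s^2\big)$ is a supermartingale, apply optional stopping / Markov, and then handle the randomness of $\langle M\rangle_T$ by a union bound over a geometric grid of $\lambda$ values (the ``peeling'' argument), or by the stitching trick of choosing $\lambda$ as a function of the fixed constant $y$ and separately controlling $\Pr[\langle M\rangle_T > y]$. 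I would present the fixed-$\lambda = \sqrt{2\alpha/y}$ version since the target bound has that exact structure, derive $\Pr[M_T \geq \sqrt{2\alpha y} + \tfrac{\lambda}{2}\langle M\rangle_T] \le e^{-\alpha}$ unconditionally, and then substitute the deterministic upper bound on $\langle M\rangle_T$ established above — this avoids any union bound and is the cleanest path. A minor secondary point to get right is that $\alpha = \log\frac{2}{\delta}$ gives $e^{-\alpha} = \delta/2$, leaving room if a second low-probability event ever needs to be absorbed, but here a single application suffices so the probability is actually $1-\delta/2 \geq 1-\delta$.
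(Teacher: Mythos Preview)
Your proposal is correct but takes a somewhat different, more elementary route than the paper. The paper's proof splits into two cases according to whether the predictable quadratic variation $\langle M\rangle_T=\sum_t\ell_t^2$ exceeds $y$ or not, invokes two separate self-normalized deviation inequalities for Gaussian martingales (Theorems~4.1 and~4.2 of \cite{bercu2008}) on the respective events, and then union-bounds the two failure events, which is why $\alpha=\log\frac{2}{\delta}$ carries a factor of $2$. Your chosen route---fix $\lambda=\sqrt{2\alpha/y}$, use that $\exp\big(\lambda M_t-\tfrac{\lambda^2}{2}\langle M\rangle_t\big)$ is a martingale because the increments are conditionally Gaussian, apply Markov's inequality once, and then replace $\langle M\rangle_T$ by the pathwise bound $\langle M\rangle_T\le\wh\gamma_T+\sum_t\sigma_t^2(x^\star)\le \tfrac{y}{8}+\sum_t\sigma_t^2(x^\star)$---reaches the same inequality without any case split or external citation, and actually spends only $\delta/2$ of the probability budget. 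Both arguments hinge on the same elementary step $\ell_t^2\le\sigma_t^2(x^\star)+\sigma_t^2(x_t)$ coming from $k(x^\star,x_t)\ge0$.

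One small correction to your displayed constants: the Chernoff step with $\lambda=\sqrt{2\alpha/y}$ gives $\alpha/\lambda=\sqrt{\alpha y/2}$, not $\sqrt{2\alpha y}$; since $\sqrt{\alpha y/2}+\tfrac{1}{16}\sqrt{2\alpha y}\le\sqrt{2\alpha y}$ after absorbing the $\wh\gamma_T\le y/8$ piece, this only strengthens the final bound, so the argument goes through as you describe.
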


\begin{proof}
Let $y = 8(C_1 \gamma_T+1)$.
We introduce the notation $\Prgt[A]$ for $\Pr[A \land \sum_{t=1}^T \ell_t^2 > y]$
and $\Prlt[A]$ for $\Pr[A \land \sum_{t=1}^T \ell_t^2 \leq y]$.
Given that $M_t$ is a Gaussian martingale,
using Theorem~4.2 and Remark~4.2 from \cite{bercu2008}
with $\langle M \rangle_T = \sum_{t=1}^T \ell_t^2$
and $a=0$ and $b=1$ we obtain for all $x>0$:
\[ \Prgt\left[ \frac{M_T}{\sum_{t=1}^T \ell_t^2} > x \right]
  < \exp\left( -\frac{x^2 y}{2} \right)\,.\]
With $x = \sqrt{\frac{2\alpha}{y}}$ where $\alpha=\log \frac{2}{\delta}$,
we have:
\[\Prgt\left[M_T > \sqrt{\frac{2\alpha}{y}} \sum_{t=1}^T \ell_t^2\right] < \frac{\delta}{2}\,.\]
By definition of $\ell_t$ in Eq.\,\ref{eq:ell}
and with $k(x^\star, x_t) \geq 0$, we have for all $t\geq 1$ that
$\ell_t^2 \leq \sigma_t^2(x_t) + \sigma_t^2(x^\star)$.
Using Equation~\ref{eq:gamma_hat} we have $\wh{\gamma}_T \leq \frac{y}{8}$,
we finally get:
\begin{equation}
\label{eq:case_gt}
\Prgt\left[ M_T > \frac{\sqrt{2\alpha y}}{8}
  + \sqrt{\frac{2\alpha}{y}}\sum_{t=1}^T \sigma_t^2(x^\star) \right] < \frac{\delta}{2}\,.
\end{equation}
Now, using Theorem~4.1 and Remark~4.2 from \cite{bercu2008}
the following inequality is satisfied for all $x>0$:
\[ \Prlt\left[ M_T > x \right] < \exp\left( -\frac{x^2}{2y} \right)\,.\]
With $x = \sqrt{2\alpha y}$ we have:
\begin{equation}
\label{eq:case_lt}
\Prlt\left[M_T > \sqrt{2 \alpha y} \right]< \frac{\delta}{2}\,.
\end{equation}

Combining Equations~\ref{eq:case_gt} and \ref{eq:case_lt} leads to,
\[\Pr\left[M_T > \sqrt{2 \alpha y} + \sqrt{\frac{2\alpha}{y}}\sum_{t=1}^T \sigma_t^2(x^\star)\right] < \delta~,\]
proving Lemma~\ref{lem:martingale}.
\end{proof}

The following lemma concludes the proof of Theorem~\ref{thm:general_regret}
using the previous concentration result
and the properties of the generic Algorithm \ref{alg:gpopt}.
\begin{lemma}
\label{lem:regret}
The cumulative regret for Algorithm \ref{alg:gpopt} on $f$ sampled from a GP
satisfies the following bound for all $\delta>0$ and $\alpha$ and $y$ defined in Lemma~\ref{lem:martingale}:
\begin{equation*}
  \Pr\left[ R_T \leq \sum_{t=1}^T \big( \phi_t(x_t) - \phi_t(x^\star) \big) +
    \sqrt{2\alpha y} + \sqrt{\frac{2\alpha}{y}}\sum_{t=1}^T \sigma_t^2(x^\star)
  \right] \geq 1-\delta \,.
\end{equation*}
\end{lemma}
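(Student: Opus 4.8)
The plan is to split the cumulative regret into the martingale $M_T$ of \eqref{eq:m} plus an accumulated posterior-mean gap, handle the gap term by exploiting the greedy rule that defines Algorithm~\ref{alg:gpopt}, and then invoke the concentration bound of Lemma~\ref{lem:martingale}. The starting point is that definition \eqref{eq:m} rearranges immediately into the identity
\[
R_T = \sum_{t=1}^T r_t = M_T + \sum_{t=1}^T \big(\mu_t(x^\star) - \mu_t(x_t)\big)\,,
\]
so it suffices to bound the two terms on the right-hand side separately.

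The structural step concerns the second sum. Since Algorithm~\ref{alg:gpopt} selects $x_t = \argmax_{x\in\X}\, \mu_t(x) + \phi_t(x)$, evaluating the maximized objective at the fixed comparator $x^\star$ gives $\mu_t(x_t) + \phi_t(x_t) \geq \mu_t(x^\star) + \phi_t(x^\star)$, that is $\mu_t(x^\star) - \mu_t(x_t) \leq \phi_t(x_t) - \phi_t(x^\star)$ for every $t \leq T$. Summing this over $t$ and substituting into the identity above yields the deterministic bound
\[
R_T \leq M_T + \sum_{t=1}^T \big(\phi_t(x_t) - \phi_t(x^\star)\big)\,.
\]

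It then remains to apply Lemma~\ref{lem:martingale} with $\alpha = \log\tfrac{2}{\delta}$ and $y = 8(C_1\gamma_T+1)$, which states that $M_T \leq \sqrt{2\alpha y} + \sqrt{2\alpha/y}\sum_{t=1}^T \sigma_t^2(x^\star)$ holds on an event of probability at least $1-\delta$; plugging this into the displayed deterministic inequality gives exactly the claim. There is essentially no hard step left at this point: all the analytic content --- the self-normalized martingale inequalities of \cite{bercu2008}, the split according to whether $\sum_{t=1}^T \ell_t^2$ exceeds $y$, and the use of \eqref{eq:gamma_hat} to pass from $\wh{\gamma}_T$ to $\gamma_T$ --- is already packaged inside Lemma~\ref{lem:martingale}. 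The only points needing care are bookkeeping ones: that $M_T$ and the posterior quantities $\mu_t$ and $\sigma_t^2$ are well-defined (which rests on $\phi_t$ being $\mat{Y}_{t-1}$-measurable, as required for Algorithm~\ref{alg:gpopt}) and that the greedy inequality is applied pointwise before summing. Granting Lemma~\ref{lem:martingale}, the argument is then a two-line combination.
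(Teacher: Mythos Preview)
Your proof is correct and follows essentially the same approach as the paper: decompose $R_T = M_T + \sum_t(\mu_t(x^\star)-\mu_t(x_t))$, bound the mean-gap sum via the greedy selection rule $\mu_t(x^\star)-\mu_t(x_t)\leq \phi_t(x_t)-\phi_t(x^\star)$, and then invoke Lemma~\ref{lem:martingale}. Your write-up is simply more explicit than the paper's two-sentence version.
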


\begin{proof}
By construction of the generic Algorithm \ref{alg:gpopt},
we have $x_t = \argmax_{x \in \X} \mu_t(x) + \phi_t(x)$,
which guarantees for all $t\geq 1$ that $\mu_t(x^\star) - \mu_t(x_t) \leq \phi_t(x_t) - \phi_t(x^\star)$.
Replacing $M_T$ by its definition in Eq.\,\ref{eq:m}
and using the previous property in Lemma~\ref{lem:martingale}
proves Lemma~\ref{lem:regret}.
\end{proof}

\subsection{Analysis of the \textsf{GP-MI} algorithm}
\label{sec:proof_gpmi}
In order to bound the cumulative regret for the \textsf{GP-MI} algorithm,
we focus on an alternative definition of the exploration functions $\phi_t$
where the last term is modified inductively
so as to simplify the sum $\sum_{t=1}^T \phi_t(x_t)$ for all $T>0$.
Being a constant term for a fixed $t>0$, Algorithm \ref{alg:gpmi} remains unchanged.
Let $\phi_t$ be defined as,
\[\phi_t(x) = \sqrt{\alpha ( \sigma_t^2(x) + \wh{\gamma}_{t-1})} - \sum_{i=1}^{t-1} \phi_i(x_i)\,,\]
where $x_t$ is the point selected by Algorithm \ref{alg:gpmi} at iteration $t$.
We have for all $T>1$,
\begin{equation}
  \label{eq:phi_t}
  \sum_{t=1}^T \phi_t(x_t) = \sqrt{\alpha \wh{\gamma}_T} -
       \sum_{t=1}^{T-1}\phi_t(x_t) + \sum_{t=1}^{T-1}\phi_t(x_t) 
  = \sqrt{\alpha \wh{\gamma}_T}\,.
\end{equation}
We can now derive upper bounds for $\sum_{t=1}^T\big(\phi_t(x_t)-\phi_t(x^\star)\big)$
which will be plugged in Theorem~\ref{thm:general_regret}
in order to cancel out the terms involving $x^\star$.
In this manner we can calibrate sharply the exploration/exploitation trade-off
by optimizing the remaining terms.

\begin{lemma}
\label{lem:gp_mi}
For the \textsf{GP-MI} algorithm,
the exploration term in the equation of Theorem~\ref{thm:general_regret}
satisfies the following inequality:
\[
\sum_{t=1}^T\big(\phi_t(x_t)-\phi_t(x^\star)\big) \leq 
  \sqrt{\alpha \wh{\gamma}_T} - \frac{\sqrt{\alpha}}{2}\frac{\sum_{t=1}^T\sigma_t^2(x^\star)}{\sqrt{\wh{\gamma}_T+1}}\,.
\]
\end{lemma}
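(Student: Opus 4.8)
\textbf{Proof proposal for Lemma~\ref{lem:gp_mi}.}

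The plan is to bound each term $\phi_t(x_t)-\phi_t(x^\star)$ separately, exploiting the fact that the inductive correction $-\sum_{i=1}^{t-1}\phi_i(x_i)$ is the same for $x_t$ and $x^\star$ and hence cancels, leaving
\[\phi_t(x_t)-\phi_t(x^\star) = \sqrt{\alpha(\sigma_t^2(x_t)+\wh{\gamma}_{t-1})} - \sqrt{\alpha(\sigma_t^2(x^\star)+\wh{\gamma}_{t-1})}\,.\]
First I would sum the positive parts: by Eq.~\ref{eq:phi_t} the telescoping gives $\sum_{t=1}^T \phi_t(x_t) = \sqrt{\alpha\wh{\gamma}_T}$ directly, so the real work is to lower-bound $\sum_{t=1}^T \phi_t(x^\star)$ by $\frac{\sqrt{\alpha}}{2}\,\frac{\sum_{t=1}^T\sigma_t^2(x^\star)}{\sqrt{\wh{\gamma}_T+1}}$ plus the same telescoping correction. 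Since the correction term appears on both sides, the statement reduces to showing
\[\sum_{t=1}^T\Big(\sqrt{\alpha(\sigma_t^2(x^\star)+\wh{\gamma}_{t-1})} - \sqrt{\alpha\wh{\gamma}_{t-1}}\Big) \geq \frac{\sqrt{\alpha}}{2}\frac{\sum_{t=1}^T\sigma_t^2(x^\star)}{\sqrt{\wh{\gamma}_T+1}}\,,\]
i.e. a term-by-term estimate on $\phi_t(x^\star)$ written in the original algorithmic form $\phi_t(x^\star)=\sqrt{\alpha}(\sqrt{\sigma_t^2(x^\star)+\wh{\gamma}_{t-1}}-\sqrt{\wh{\gamma}_{t-1}})$ minus the inductive correction.

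The key analytic step is the elementary inequality $\sqrt{a+b}-\sqrt{a} \geq \frac{b}{2\sqrt{a+b}}$, valid for $a\geq 0$, $b\geq 0$ (equivalently $\sqrt{a+b}-\sqrt{a} = \frac{b}{\sqrt{a+b}+\sqrt{a}} \geq \frac{b}{2\sqrt{a+b}}$). Applying this with $a=\wh{\gamma}_{t-1}$ and $b=\sigma_t^2(x^\star)$ gives
\[\sqrt{\sigma_t^2(x^\star)+\wh{\gamma}_{t-1}}-\sqrt{\wh{\gamma}_{t-1}} \geq \frac{\sigma_t^2(x^\star)}{2\sqrt{\sigma_t^2(x^\star)+\wh{\gamma}_{t-1}}}\,.\]
Then I would bound the denominator uniformly: since $\sigma_t^2(x^\star)\leq k(x^\star,x^\star)\leq 1$ by the normalized-variance assumption and $\wh{\gamma}_{t-1}\leq \wh{\gamma}_T$ (the partial sums $\wh{\gamma}_t$ are nondecreasing because each $\sigma_t^2(x_t)\geq 0$), we get $\sqrt{\sigma_t^2(x^\star)+\wh{\gamma}_{t-1}}\leq\sqrt{1+\wh{\gamma}_T}$. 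Hence each summand is at least $\frac{\sigma_t^2(x^\star)}{2\sqrt{\wh{\gamma}_T+1}}$, and summing over $t$ yields the desired lower bound on $\sum_t\phi_t(x^\star)$ (in the telescoped form). Combining with $\sum_t\phi_t(x_t)=\sqrt{\alpha\wh{\gamma}_T}$ from Eq.~\ref{eq:phi_t} finishes the proof.

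I expect the main (and only genuine) obstacle to be bookkeeping with the inductive definition of $\phi_t$: one must be careful that the correction term $\sum_{i=1}^{t-1}\phi_i(x_i)$ cancels in the difference $\phi_t(x_t)-\phi_t(x^\star)$, so that the whole argument can be carried out on the "clean" quantities $\sqrt{\alpha(\sigma_t^2(\cdot)+\wh{\gamma}_{t-1})}$. Once that cancellation is made explicit, the remaining steps are the two elementary inequalities above plus the monotonicity of $\wh{\gamma}_t$ and the bound $\sigma_t^2(x^\star)\leq 1$, all of which are routine. No concentration or probabilistic input is needed here — this lemma is purely deterministic and feeds into Theorem~\ref{thm:general_regret} afterward.
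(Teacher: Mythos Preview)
Your proposal is correct and follows essentially the same route as the paper. The paper also cancels the inductive correction, rewrites the sum as $\sqrt{\alpha}\big(\sqrt{\wh{\gamma}_T}+\sum_t(\sqrt{\wh{\gamma}_{t-1}}-\sqrt{\wh{\gamma}_{t-1}+\sigma_t^2(x^\star)})\big)$, and then applies the concavity bound $\sqrt{a+b}-\sqrt{a}\leq \tfrac{b}{2\sqrt{a}}$ with $a=\wh{\gamma}_{t-1}+\sigma_t^2(x^\star)$ and $b=-\sigma_t^2(x^\star)$, followed by $a\leq \wh{\gamma}_T+1$; this is exactly your inequality $\sqrt{a+b}-\sqrt{a}\geq \tfrac{b}{2\sqrt{a+b}}$ after relabeling, so the two arguments coincide.
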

\begin{proof}
Using our alternative definition of $\phi_t$
which gives the equality stated in Equation~\ref{eq:phi_t}, we know that,
\begin{equation*}
  \sum_{t=1}^T \big( \phi_t(x_t) - \phi_t(x^\star) \big) =
  \sqrt{\alpha} \Bigg(
  \sqrt{\wh{\gamma}_T} +
  \sum_{t=1}^T \Big(
    \sqrt{\wh{\gamma}_{t-1}}-\sqrt{\wh{\gamma}_{t-1}+\sigma_t^2(x^\star)} \Big) \Bigg)\,.
\end{equation*}

By concavity of the square root,
we have for all $a\geq-b$
that $\sqrt{a + b} - \sqrt{a} \leq \frac{b}{2 \sqrt{a}}$.
Introducing the notations $a_t = \wh{\gamma}_{t-1}+\sigma_t^2(x^\star)$
and $b_t = -\sigma_t^2(x^\star)$,
we obtain,
\[
\sum_{t=1}^T \big( \phi_t(x_t) - \phi_t(x^\star) \big)
  \leq \sqrt{\alpha \wh{\gamma}_T} + \frac{\sqrt{\alpha}}{2} \sum_{t=1}^T \frac{b_t}{\sqrt{a_t}}\,.
\]
Moreover, with $0\leq \sigma_t^2(x) \leq 1$ for all $x\in \X$,
we have $a_t\leq \wh{\gamma}_T+1$ and $b_t \leq 0$ for all $t\leq T$ which gives,
\[
\sum_{t=1}^T \frac{b_t}{\sqrt{a_t}} \leq - \frac{\sum_{t=1}^T \sigma_t^2(x^\star)}{\sqrt{\wh{\gamma}_T+1}}\,,
\]
leading to the inequality of Lemma~\ref{lem:gp_mi}.
\end{proof}

The following lemma combines the results from Theorem~\ref{thm:general_regret} and Lemma~\ref{lem:gp_mi}
to derive upper bounds on the cumulative regret for the \textsf{GP-MI} algorithm with high probability.

\begin{lemma}
\label{lem:regret_gpmi}
The cumulative regret for Algorithm \ref{alg:gpmi} on f sampled from a GP
satisfies the following bound for all $\delta>0$ and $\alpha$ defined in Lemma~\ref{lem:martingale},
\[
\Pr\left[ R_T \leq 5\sqrt{\alpha C_1 \gamma_T} + 4\sqrt{\alpha} \right] \geq 1-\delta\,.
\]
\end{lemma}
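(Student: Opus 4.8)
Proof proposal for Lemma~\ref{lem:regret_gpmi}.

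The plan is to combine the generic regret bound of Theorem~\ref{thm:general_regret} with the exploration estimate of Lemma~\ref{lem:gp_mi}, and then simplify using the relation $\wh{\gamma}_T \leq C_1 \gamma_T$ from Equation~\ref{eq:gamma_hat}. First I would invoke Theorem~\ref{thm:general_regret}, which states that with probability at least $1-\delta$,
\[
R_T \leq \sum_{t=1}^T \big( \phi_t(x_t) - \phi_t(x^\star) \big) + 4\sqrt{\alpha(C_1\gamma_T+1)} + \frac{\sqrt{\alpha}}{2}\frac{\sum_{t=1}^T \sigma_t^2(x^\star)}{\sqrt{C_1\gamma_T+1}}\,.
\]
Then I would substitute the bound from Lemma~\ref{lem:gp_mi} for the first term. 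The key cancellation to watch is between the negative term $-\frac{\sqrt{\alpha}}{2}\frac{\sum_t \sigma_t^2(x^\star)}{\sqrt{\wh{\gamma}_T+1}}$ coming from Lemma~\ref{lem:gp_mi} and the positive term $\frac{\sqrt{\alpha}}{2}\frac{\sum_t \sigma_t^2(x^\star)}{\sqrt{C_1\gamma_T+1}}$ coming from the theorem. Since $\wh{\gamma}_T \leq C_1\gamma_T$, we have $\sqrt{\wh{\gamma}_T+1} \leq \sqrt{C_1\gamma_T+1}$, so the negative term dominates the positive one in absolute value and the two together are $\leq 0$. This is the crucial step where the inductive redefinition of $\phi_t$ pays off: the $x^\star$-dependent contributions, which cannot be controlled by the algorithm's choices, are eliminated.

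After discarding the non-positive remainder, what survives is
\[
R_T \leq \sqrt{\alpha \wh{\gamma}_T} + 4\sqrt{\alpha(C_1\gamma_T+1)}\,.
\]
Applying $\wh{\gamma}_T \leq C_1\gamma_T$ to the first term gives $\sqrt{\alpha\wh{\gamma}_T} \leq \sqrt{\alpha C_1\gamma_T}$. For the second term I would use subadditivity of the square root, $\sqrt{C_1\gamma_T+1} \leq \sqrt{C_1\gamma_T} + 1$, so that $4\sqrt{\alpha(C_1\gamma_T+1)} \leq 4\sqrt{\alpha C_1\gamma_T} + 4\sqrt{\alpha}$. Collecting terms yields $R_T \leq 5\sqrt{\alpha C_1\gamma_T} + 4\sqrt{\alpha}$ with probability at least $1-\delta$, which is exactly the claim.

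The main obstacle is not any single estimate but ensuring the bookkeeping of the $x^\star$-dependent terms is airtight — specifically, verifying that the denominators $\sqrt{\wh{\gamma}_T+1}$ and $\sqrt{C_1\gamma_T+1}$ are compared in the correct direction so that the difference of the two $\sigma_t^2(x^\star)$ sums is genuinely non-positive, and that the concavity inequality $\sqrt{a+b}-\sqrt{a}\leq \frac{b}{2\sqrt{a}}$ used in Lemma~\ref{lem:gp_mi} was applied with the admissible sign of $b$. Once that cancellation is confirmed, the rest is elementary manipulation of square roots. (One should also note that the constant $y = 8(C_1\gamma_T+1)$ appearing in Lemma~\ref{lem:martingale} has already been folded into the form of Theorem~\ref{thm:general_regret}, so no further optimization over $y$ is needed here.)
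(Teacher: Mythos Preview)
Your proposal is correct and follows exactly the paper's own proof: combine Theorem~\ref{thm:general_regret} with Lemma~\ref{lem:gp_mi}, use $\wh{\gamma}_T \leq C_1\gamma_T$ (Equation~\ref{eq:gamma_hat}) to make the two $\sigma_t^2(x^\star)$ terms cancel with the correct sign, and then simplify the surviving square roots via subadditivity. If anything, you have spelled out the ``simplifies using Equation~\ref{eq:gamma_hat}'' step more carefully than the paper does.
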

\begin{proof}
Considering Theorem~\ref{thm:general_regret} in the case of the \textsf{GP-MI} algorithm and
bounding $\sum_{t=1}^T\big(\phi_t(x_t)-\phi_t(x^\star)\big)$ with Lemma~\ref{lem:gp_mi},
we obtain the following bound on the cumulative regret incurred by \textsf{GP-MI}:
\begin{dmath*}
  \Pr\left[ R_T \leq \sqrt{\alpha \wh{\gamma}_T} - \frac{\sqrt{\alpha}}{2} \frac{\sum_{t=1}^T
    \sigma_t^2(x^\star) }{\sqrt{\wh{\gamma}_T+1}} + 4\sqrt{\alpha (C_1 \gamma_T+1)}
    + \frac{\sqrt{\alpha}}{2} \frac{\sum_{t=1}^T\sigma_t^2(x^\star)}{\sqrt{C_1 \gamma_T+1}} \right] \geq 1-\delta \,,
\end{dmath*}
which simplifies to the inequality of Lemma~\ref{lem:regret_gpmi} using Equation~\ref{eq:gamma_hat},
and thus proves Theorem~\ref{thm:regret}.
\end{proof}

\section{Practical considerations and experiments}
\label{sec:expes}
\subsection{Numerical experiments}


\paragraph{Protocol.}
We compare the empirical performances of our algorithm
against the state-of-the-art of GP optimization,
the \textsf{GP-UCB} algorithm \cite{srinivas2012},
and a commonly used heuristic, the Expected Improvement (\textsf{EI}) algorithm with GP \cite{jones1998}.
The tasks used for assessment come from
two real applications and five synthetic problems described here.
For all data sets and algorithms the learners were initialized
with a random subset of $10$ observations $\big\{(x_i, y_i)\big\}_{i\leq 10}$.
When the prior distribution of the underlying function was not known,
the Bayesian inference was made using a squared exponential kernel.
We first picked the half of the data set to estimate the hyper-parameters of the kernel
via cross validation in this subset.
In this way, each algorithm was running with the same prior information.
The value of the parameter $\delta$ for the \textsf{GP-MI} and the \textsf{GP-UCB} algorithms
was fixed to $\delta=10^{-6}$ for all these experimental tasks.
Modifying this value by several orders of magnitude
is insignificant with respect to the empirical mean cumulative regret
incurred by the algorithms,
as discussed in Section \ref{sec:practical_aspects}.
The results are provided in Figure \ref{fig:expes}.
The curves show the evolution of the average regret $\frac{R_T}{T}$
in term of iteration $T$.
We report the mean value with the confidence interval over a hundred experiments.

\paragraph{Description of the data sets.}
\begin{figure}[t]
  \begin{center}
    \begin{subfigure}[Gaussian mixture]{
        \includegraphics[width=.4\columnwidth]{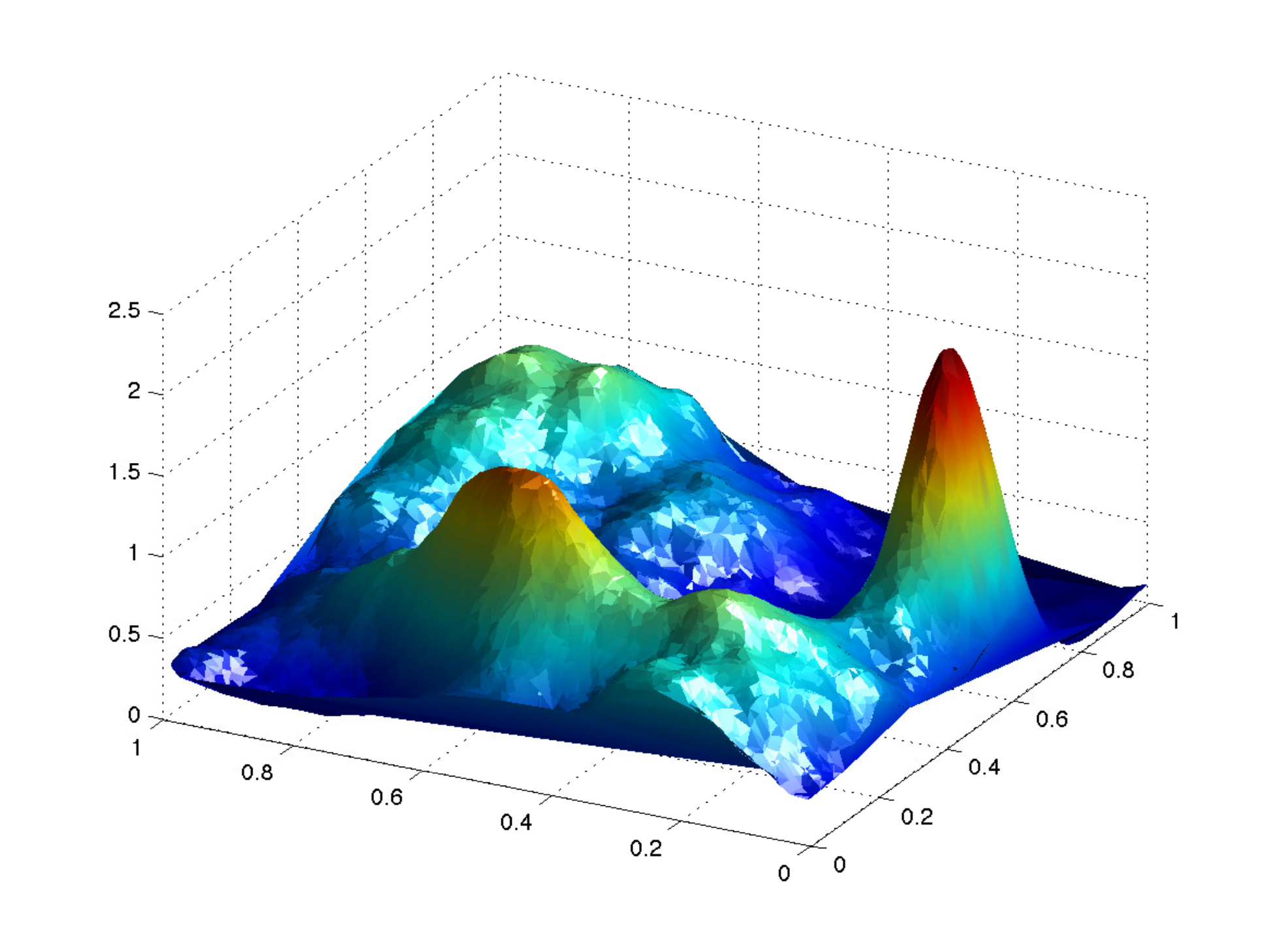}
        \label{fig:gaussian_mix}}
    \end{subfigure}
    \begin{subfigure}[Himmelblau]{
        \includegraphics[width=.4\columnwidth]{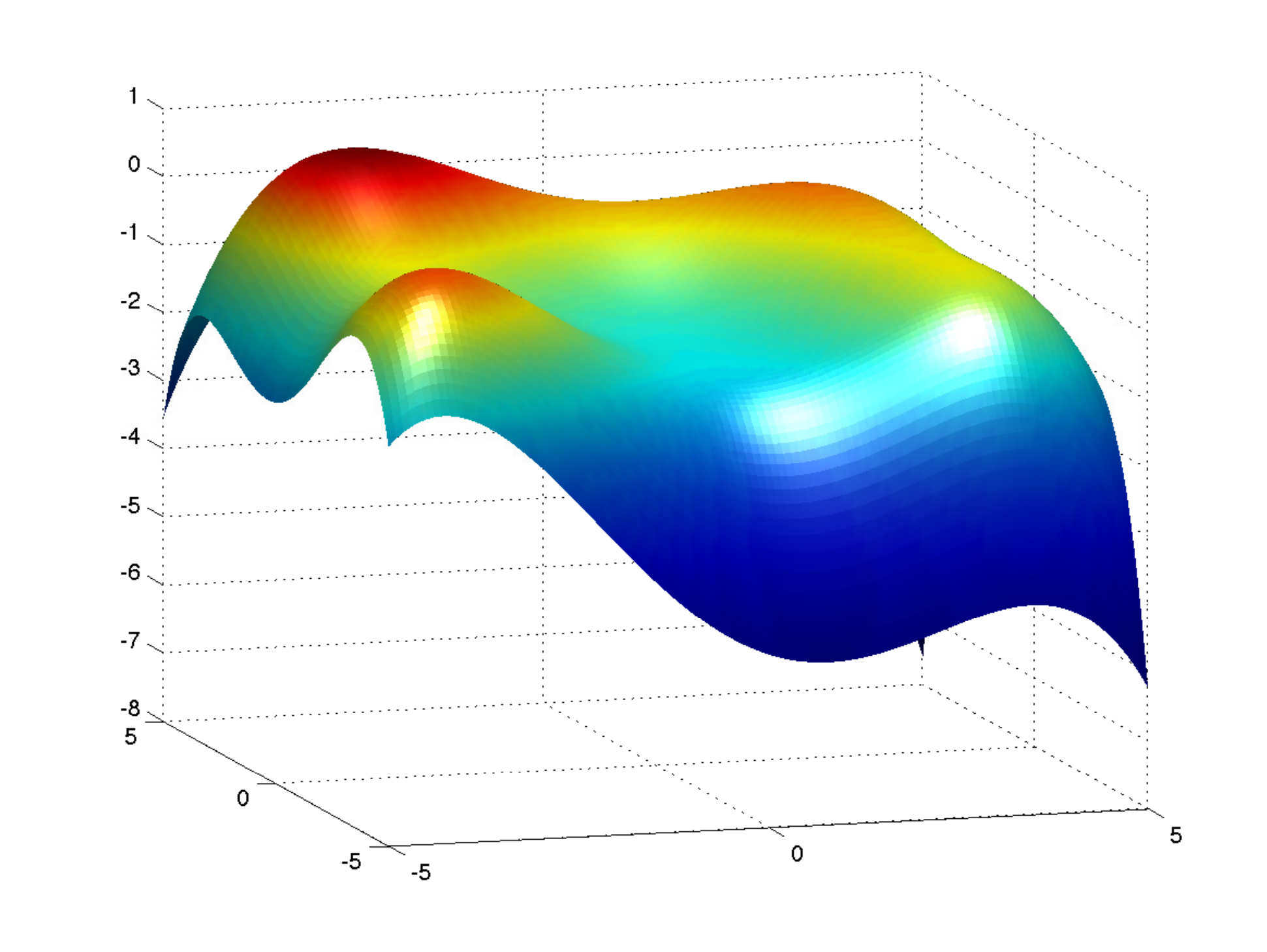}
        \label{fig:himmelblau}}
    \end{subfigure}
    \caption{Visualization of the synthetic functions used for assessment}
  \end{center}
\end{figure} 

We describe briefly all the data sets used for assessment.
\begin{itemize}
\item \emph{Generated GP.}
The generated Gaussian process functions are random GPs drawn from an isotropic Mat\'{e}rn kernel
in dimension $2$ and $4$, with the kernel bandwidth set to $1$ for dimension $2$,
and $16$ for dimension $4$.
The Mat\'{e}rn parameter was set to $\nu=3$ and the noise standard deviation to 1\%
of the signal standard deviation.

\item \emph{Gaussian Mixture.}
This synthetic function comes from the addition of three $2$-D Gaussian functions.
We then perturb these Gaussian functions with smooth variations
generated from a Gaussian Process with isotropic Mat\'{e}rn Kernel and 1\% of noise.
It is shown on Figure \ref{fig:gaussian_mix}.
The highest peak being thin, the sequential search for the maximum of this function
is quite challenging.

\item \emph{Himmelblau.}
This task is another synthetic function in dimension $2$.
We compute a slightly tilted version of the Himmelblau's function with
the addition of a linear function,
and take the opposite to match the challenge of finding its maximum.
This function presents four peaks but only one global maximum.
It gives a practical way to test the ability of a strategy to manage exploration/exploitation trade-offs.
It is represented in Figure \ref{fig:himmelblau}.

\item \emph{Branin.}
The Branin or Branin-Hoo function is a common benchmark function for global optimization.
It presents three global optimum in the $2$-D square $[-5, 10]\times[0, 15]$.
This benchmark is one of the two synthetic functions used by \cite{srinivas2012}
to evaluate the empirical performances of the \textsf{GP-UCB} algorithm.
No noise has been added to the original signal in this experimental task.

\item \emph{Goldstein-Price.}
The Goldstein \& Price function is an other benchmark function for global optimization,
with a single global optimum but several local optima in the $2$-D square $[-2,2]\times[-2,2]$.
This is the second synthetic benchmark used by \cite{srinivas2012}.
Like in the previous challenge, no noise has been added to the original signal.

\item \emph{Tsunamis.}
Recent post-tsunami survey data as well as the numerical simulations of \cite{hill2012}
have shown that in some cases the run-up,
which is the maximum vertical extent of wave climbing on a beach,
in areas which were supposed to be protected by small islands in the vicinity of coast,
was significantly higher than in neighboring locations.
Motivated by these observations \cite{stefanakis2012} investigated this phenomenon
by employing numerical simulations using the VOLNA code \cite{dutykh2011}
with the simplified geometry of a conical island sitting on a flat surface in front of a sloping beach.
In the study of \cite{stefanakis2013} the setup was controlled by five physical parameters
and the aim was to find with confidence and with the least number of simulations
the parameters leading to the maximum run-up amplification.

\item \emph{Mackey-Glass function.}
The Mackey-Glass delay-differential equation
is a chaotic system in dimension $6$, but without noise.
It models real feedback systems and is used in physiological domains such as
hematology, cardiology, neurology, and psychiatry.
The highly chaotic behavior of this function makes it an exceptionally difficult optimization problem.
It has been used as a benchmark for example by \cite{flake2002}.
\end{itemize}

\pagebreak
\paragraph{Empirical comparison of the algorithms.}

Figure \ref{fig:expes} compares the empirical mean average regret $\frac{R_T}{T}$ for the three algorithms.
On the easy optimization assessments like the Branin data set (Fig.\,\ref{fig:expe_branin})
the three strategies behave in a comparable manner,
but the \textsf{GP-UCB} algorithm incurs a larger cumulative regret.
For more difficult assessments the \textsf{GP-UCB} algorithm performs poorly
and our algorithm always surpasses the \textsf{EI} heuristic.
The improvement of the \textsf{GP-MI} algorithm against the two competitors
is the most significant for exceptionally challenging optimization tasks
as illustrated in Figures \ref{fig:expe_gp2} to \ref{fig:expe_himmelblau} and \ref{fig:expe_mg},
where the underlying functions present several local optima.
The ability of our algorithm to deal with the exploration/exploitation trade-off
is emphasized by these experimental results as its average regret decreases directly after the first iterations,
avoiding unwanted exploration like \textsf{GP-UCB} on Figures 
\ref{fig:expe_gp2} to \ref{fig:expe_himmelblau},
or getting stuck in some local optimum like \textsf{EI} on Figures
\ref{fig:expe_gaussian_mix}, \ref{fig:expe_tsunamis} and \ref{fig:expe_mg}.
We further mention that the \textsf{GP-MI} algorithm 
is empirically robust against the number of dimensions of the data set
(Fig.\,\ref{fig:expe_gp4}, \ref{fig:expe_tsunamis}, \ref{fig:expe_mg}).

\begin{figure}[H]
  \begingroup
  \tikzset{every picture/.style={scale=.6}}
  \centering
  \begin{subfigure}[Generated GP\! ($d\!=\!2$)]{
      \ifcompiletikz
        \input{figs/generated_gp_d2.tikz}
      \else
        \includegraphics[height=.27\columnwidth]{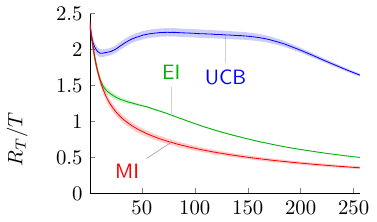}
      \fi
      \label{fig:expe_gp2}}
  \end{subfigure}
  \begin{subfigure}[Generated GP\! ($d\!=\!4$)]{
      \ifcompiletikz
        \input{figs/generated_gp_d4.tikz}
      \else
        \includegraphics[height=.27\columnwidth]{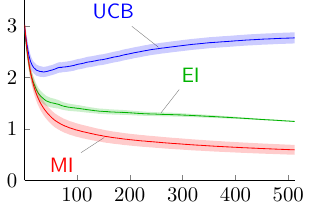}
      \fi
      \label{fig:expe_gp4}}
  \end{subfigure}\\
  \begin{subfigure}[Gaussian Mixture]{
      \ifcompiletikz
        \input{figs/gaussian_mix.tikz}
      \else
        \includegraphics[height=.27\columnwidth]{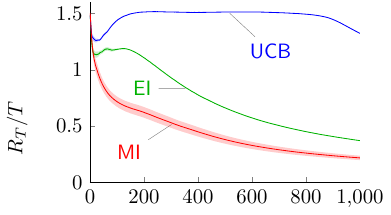}
      \fi
      \label{fig:expe_gaussian_mix}}
  \end{subfigure}
  \begin{subfigure}[Himmelblau]{
      \ifcompiletikz
        \input{figs/himmelblau.tikz}
      \else
        \includegraphics[height=.27\columnwidth]{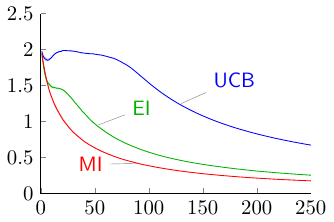}
      \fi
      \label{fig:expe_himmelblau}}
  \end{subfigure}\\
  \begin{subfigure}[Branin]{
      \ifcompiletikz
        \input{figs/branin.tikz}
      \else
        \includegraphics[height=.27\columnwidth]{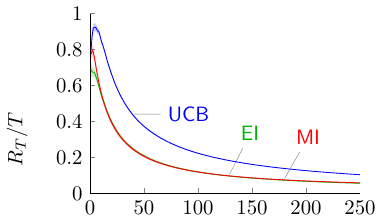}
      \fi
      \label{fig:expe_branin}}
  \end{subfigure}
  \begin{subfigure}[Goldstein]{
      \ifcompiletikz
        \input{figs/goldstein.tikz}
      \else
        \includegraphics[height=.27\columnwidth]{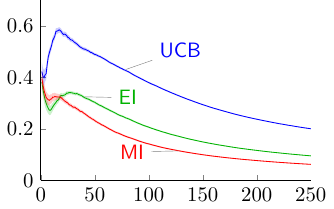}
      \fi
      \label{fig:expe_goldstein}}
  \end{subfigure}\\
  \begin{subfigure}[Tsunamis]{
      \ifcompiletikz
        \input{figs/tsunamis.tikz}
      \else
        \includegraphics[height=.27\columnwidth]{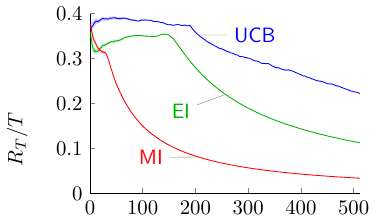}
      \fi
      \label{fig:expe_tsunamis}}
  \end{subfigure}
  \begin{subfigure}[Mackey-Glass]{
      \ifcompiletikz
        \input{figs/mg.tikz}
      \else
        \includegraphics[height=.27\columnwidth]{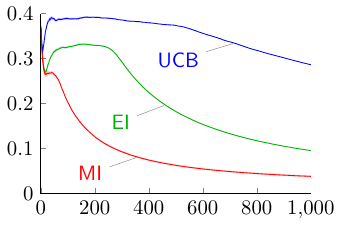}
      \fi
      \label{fig:expe_mg}}
  \end{subfigure}\\
  \endgroup
  \caption{Empirical mean and confidence interval of the average regret $\frac{R_T}{T}$ in term of iteration $T$
    on real and synthetic tasks
    for the \textsf{GP-MI} and \textsf{GP-UCB} algorithms and the \textsf{EI} heuristic
  (lower is better).}
  \label{fig:expes}
\end{figure}

\begin{figure}[t]
  \begingroup
  \tikzset{every picture/.style={scale=.8}}
  \begin{center}
    \ifcompiletikz
      \input{figs/delta.tikz}
    \else
      \includegraphics[width=.5\columnwidth]{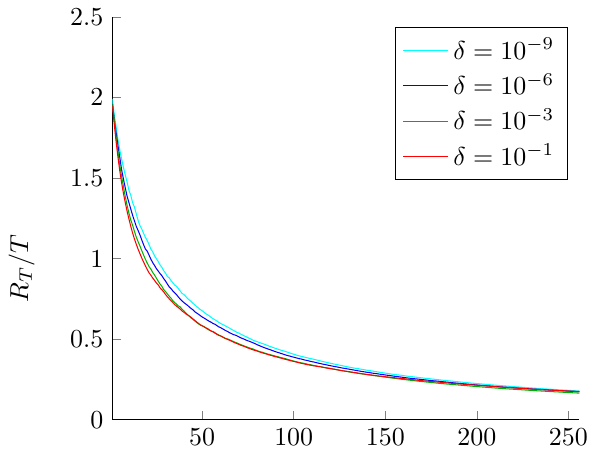}
    \fi
  \end{center}
  \endgroup
  \caption{Small impact of the value of $\delta$ on the mean average regret of the 
    \textsf{GP-MI} algorithm running on the Himmelblau data set.}
  \label{fig:delta}
\end{figure}

\subsection{Practical aspects}
\label{sec:practical_aspects}

\paragraph{Calibration of $\alpha$.}
The value of the parameter $\alpha$ is chosen following Theorem~\ref{thm:regret}
as $\alpha=\log\frac{2}{\delta}$ with $0<\delta<1$ being a confidence parameter.
The guarantees we prove in Section \ref{sec:proof_gpmi} on the cumulative regret for the \textsf{GP-MI} algorithm
holds with probability at least $1-\delta$.
With $\alpha$ increasing linearly for $\delta$ decreasing exponentially toward $0$,
the algorithm is robust to the choice of $\delta$.
We present on Figure \ref{fig:delta} the small impact of $\delta$ on the average regret
for four different values selected on a wide range.

\paragraph{Numerical Complexity.}
Even if the numerical cost of \textsf{GP-MI} is insignificant in practice
compared to the cost of the evaluation of $f$,
the complexity of the sequential Bayesian update \cite{Osborne2010} is $\O(T^2)$
and might be prohibitive for large $T$.
One can reduce drastically the computational time
by means of Lazy Variance Calculation \cite{desautels2012},
built on the fact that $\sigma_T^2(x)$ always decreases for increasing $T$ and for all $x\in\X$.
We further mention that approximated inference algorithms
such as the EP approximation and MCMC sampling \cite{kuss2005}
can be used as an alternative if the computational time is a restrictive factor.

\comment{
\paragraph{Stopping criterion.}
One challenging problem we face in practice
is to decide when to stop the iterative strategy.
We have to design an empirical, yet robust, criterion.
One trivial solution is to fix the number of iterations allowed
by a predefined limit.
This is not a suitable solution for the general case,
as one does not know precisely the amount of exploration needed
to be confident about the maximum of $f$ \cite{stefanakis2013}.
The \textsf{GP-MI} algorithm has the inherent property of decreasing
the importance of exploration along the iterations.
Contrary to \textsf{GP-UCB} which augments the amount of exploration with time,
the exploration region of \textsf{GP-MI} shrinks.
Therefore, we can simply monitor the value of $\phi_t(x_t)$
and stop when it becomes small enough,
for example after reaching a given threshold.
}

\section{Conclusion}
We introduced the \textsf{GP-MI} algorithm for GP optimization
and prove upper bounds on its cumulative regret
which improve exponentially the state-of-the-art in common settings.
The theoretical analysis was presented in a generic framework
in order to expand its impact to other similar algorithms.
The experiments we performed on real and synthetic assessments
confirmed empirically the efficiency of our algorithm
against both the theoretical state-of-the-art of GP optimization,
the \textsf{GP-UCB} algorithm,
and the commonly used \textsf{EI} heuristic.

\subsubsection*{Acknowledgements}
The authors would like to thank David Buffoni and Rapha\"{e}l Bonaque
for fruitful discussions.
The authors also thank the anonymous reviewers of the 31st International Conference on Machine Learning
for their detailed feedback.

\bibliography{../../../biblio/biblio}
\bibliographystyle{plainnat}

\end{document}